\DeclareMathOperator{\trace}{trace}
\newcommand{\mat}[1]{\boldsymbol{#1}}
\renewcommand{\vec}[1]{\boldsymbol{\mathrm{#1}}}
\providecommand{\mD}{\ensuremath{\mat{D}}}
\providecommand{\mK}{\ensuremath{\mat{K}}}
\providecommand{\mU}{\ensuremath{\mat{U}}}
\providecommand{\mW}{\ensuremath{\mat{W}}}
\providecommand{\mY}{\ensuremath{\mat{Y}}}
\providecommand{\va}{\ensuremath{\vec{a}}}
\providecommand{\vd}{\ensuremath{\vec{d}}}
\providecommand{\ve}{\ensuremath{\vec{e}}}
\providecommand{\vf}{\ensuremath{\vec{f}}}
\providecommand{\vg}{\ensuremath{\vec{g}}}
\providecommand{\vl}{\ensuremath{\vec{l}}}
\providecommand{\vs}{\ensuremath{\vec{s}}}
\providecommand{\vu}{\ensuremath{\vec{u}}}
\providecommand{\vv}{\ensuremath{\vec{v}}}
\providecommand{\vx}{\ensuremath{\vec{x}}}
\providecommand{\vy}{\ensuremath{\vec{y}}}
\newcommand{\sdp}{SDP\xspace}
\begin{document}

\title{\Large Fast Multiplier Methods to Optimize Non-exhaustive, Overlapping Clustering}
\author{Yangyang Hou\thanks{Department of Computer Science, Purdue University. Email:
$\left\{hou13, dgleich\right\}$@purdue.edu} \\
\and 
Joyce Jiyoung Whang\thanks{Department of Computer Engineering, Sungkyunkwan University. Email: jjwhang@skku.edu}  \\
\and 
David F. Gleich\footnotemark[1] \\
\and 
Inderjit S. Dhillon\thanks{Department of Computer Science, The University of Texas at
Austin. Email: inderjit@cs.utexas.edu}}
\date{}

\maketitle


\begin{abstract} \small\baselineskip=9pt 
Clustering is one of the most fundamental and important tasks in data mining. Traditional clustering algorithms, such as K-means, assign every data point to exactly one cluster. However, in real-world datasets, the clusters may overlap with each other. Furthermore, often, there are outliers that should not belong to any cluster. We recently proposed the NEO-K-Means (Non-Exhaustive, Overlapping K-Means) objective as a way to address both issues in an integrated fashion. Optimizing this discrete objective is NP-hard, and even though there is a convex relaxation of the objective, straightforward convex optimization approaches are too expensive for large datasets. A practical alternative is to use a low-rank factorization of the solution matrix in the convex formulation. The resulting optimization problem is non-convex, and we can locally optimize the objective function using an augmented Lagrangian method. In this paper, we consider two fast multiplier methods to accelerate the convergence of an augmented Lagrangian scheme: a proximal method of multipliers and an alternating direction method of multipliers (ADMM). For the proximal augmented Lagrangian or proximal method of multipliers, we show a convergence result for the non-convex case with bound-constrained subproblems. These methods are up to 13 times faster---with no change in quality---compared with a standard augmented Lagrangian method on problems with over 10,000 variables and bring runtimes down from over an hour to around 5 minutes. 

\end{abstract}

\section{Introduction}
Traditional clustering algorithms, such as $k$-means, produce a disjoint, exhaustive clustering, i.e., the clusters are pairwise disjoint, and every data point is assigned to some cluster. However, in real-world datasets, the clusters may overlap with each other, and there are often outliers that should not belong to any cluster. We recently proposed the NEO-K-Means (Non-Exhaustive, Overlapping K-Means) objective as a generalization of the $k$-means clustering objective that allows us to simultaneously identify overlapping clusters as well as outliers~\cite{whang-2015-neo}. Hence, it  produces a non-exhaustive clustering. Curiously, both operations appear to be necessary because the outliers induce non-obvious effects when the clusters are allowed to overlap. It has been shown that the NEO-K-Means objective is effective in finding ground-truth clusters in data clustering problems. Furthermore, by considering a weighted and kernelized version of the NEO-K-Means objective, we can also tackle the problem of finding overlapping communities in social and information networks.


There are currently two practical methods to optimize the non-convex NEO-K-Means objective for large problems: the iterative NEO-K-Means algorithm~\cite{whang-2015-neo} that generalizes Lloyd's algorithm~\cite{lloyd} and an augmented Lagrangian algorithm to optimize a non-convex, low-rank semidefinite programming (SDP) relaxation of the NEO-K-Means objective~\cite{Hou-Whang-2015-lrsdp-neo}. The iterative algorithm is fast, but it tends to get stuck into regions where the more sophisticated optimization methods can make further progress. The augmented Lagrangian method for the non-convex objective, when started from the output of the iterative algorithm, is able to make further progress on optimizing the objective function. In addition, the augmented Lagrangian method tends to achieve better $F_1$ performance on identifying ground-truth clusters and produce better overlapping communities in real-world networks than the simple iterative algorithm~\cite{Hou-Whang-2015-lrsdp-neo}. In this paper, our goal is to improve upon the augmented Lagrangian method to optimize the low-rank SDP for the NEO-K-Means objective more quickly.


The optimization problem that results from the low-rank strategy on the convex SDP is a non-convex, quadratically constrained, bound-constrained problem. We consider two \emph{multiplier methods} for this problem. The first method adds a proximal regularizer to the augmented Lagrangian method. This general strategy is called either the proximal augmented Lagrangian method~(e.g.,~\cite{Humes-2004-prox-aug-lagrangian}) or the proximal method of multipliers~\cite{Rockafellar-1976-AugLagProx}. The second method is an alternating direction method of multipliers (ADMM) strategy for our objective function. Both strategies, when specialized on the NEO-K-Means problem, have the potential to accelerate our solution process. 

There is an extensive literature on both strategies for convex optimization~\cite{Boyd-2011-admm,Friedlander-2012-primal-dual,Rockafellar-1976-AugLagProx} and there are a variety of convergence theories in the non-convex case~\cite{Magnusson-2014-nonconvex-admm,Pennanen-2002-proxMOR,Iusem-2003-inexact-prox}. However, we were unable to identify any existing convergence guarantees for these methods that mapped to our specific instantiations with bound-constrained subproblems. Towards that end, we specialize a general convergence result about the proximal augmented Lagrangian or proximal method of multipliers due to Pennanen~\cite{Pennanen-2002-proxMOR} to our algorithm. The resulting theorem is a general convergence result about the proximal augmented Lagrangian method for non-convex problems with bound-constrained subproblems (Theorem~\ref{theorem:1}). The proof involves adapting a few details from Pennanen to our case.

We evaluate the resulting methods on real-world problems where the existing augmented Lagrangian takes over an hour of computation time. The proximal augmented Lagrangian strategy tends to run about $3-6$ times faster, and the ADMM strategy tends to run about $4-13$ times faster bringing the runtimes of these methods down into range of $5$ to $10$ minutes. The iterative method, in contrast, runs in seconds -- so there is still a considerable gap between the approaches. That said, the optimization based approaches have runtimes that are reasonable for a pipeline-style analysis and cases where the data collection itself is highly time-consuming as would be common in many datasets from the biological and physical sciences. 

In summary:
\begin{compactitem}
\item We propose two algorithms to optimize the non-convex problem for non-exhaustive, overlapping clustering: a proximal augmented Lagrangian method and an ADMM method.
\item We specialize a general convergence result about the proximal method of multipliers for non-convex problems to the bound-constrained proximal augmented Lagrangian method to have a sound convergence theory. 
\item We show that these new methods reduce the runtime for problems where the classical augmented Lagrangian method takes over an hour to the range of $5$ to $10$ minutes with no change in quality.
\end{compactitem}

The rest of the paper is organized as follows. In Section~\ref{sec:pre}, we review the NEO-K-Means objective and its low-rank SDP formulation, and in Section~\ref{sec:lrsdp}, we formally describe the classical augmented Lagrangian method. In Section~\ref{sec:methods}, we present our two multiplier methods: the proximal augmented Lagrangian method, and an ADMM for the NEO-K-Means low-rank SDP. For the proximal augmented Lagrangian method, we present the convergence analysis in Section~\ref{sec:conv}. In Section~\ref{sec:sadmm}, we discuss simplified ADMM variants. Finally, we present experimental results in Section~\ref{sec:results}, and discuss future work in Section~\ref{sec:discuss}.


\section{The NEO-K-Means Objective}
\label{sec:pre}
The goal of non-exhaustive, overlapping clustering is to find a set of cohesive clusters such that clusters are allowed to overlap with each other and outliers are not assigned to any cluster. That is, given a set of data points $\mathcal{X} = \{ {\bf x}_1, {\bf x}_2, ..., {\bf x}_n \}$, we find a set of clusters $\mathcal{C}_1, \mathcal{C}_2, ..., \mathcal{C}_k$ such that $\mathcal{C}_1 \cup \mathcal{C}_2 \cup ... \cup \mathcal{C}_k \subseteq \mathcal{X}$ and $\mathcal{C}_i \cap \mathcal{C}_j \neq \emptyset$ for some $i \neq j$. 

To find such clusters, we proposed the NEO-K-Means objective function in~\cite{whang-2015-neo}. The NEO-K-Means objective is an intuitive variation of the classical $k$-means where two parameters $\alpha$ and $\beta$ are introduced to control the amount of overlap and non-exhaustiveness, respectively. We also found that optimizing a weighted and kernelized NEO-K-Means objective is equivalent to optimizing normalized cuts for overlapping community detection~\cite{whang-2015-neo}. 

Let us define an assignment matrix $\mU=[u_{ij}]_{n \times k}$ such that $u_{ij}=1$ if a data point $\vx_i$ belongs to $\mathcal{C}_j$; and $u_{ij}=0$ otherwise. Let $\mathbbm{I}\{ exp \}$ denote the indicator function such that $\mathbbm{I}\{ exp \} = 1$ if $exp$ is true; 0 otherwise. Given a positive weight for each data point $w_i$, and a nonlinear mapping $\phi$, the weighted kernel NEO-K-Means objective function is defined as follows:
\begin{equation} \label{eq:neo}
\begin{array}{ll}
  \underset{\mU}{\text{minimize}} & \sum_{c=1}^k \sum_{i=1}^n u_{ic} w_i \| \phi(\vx_i) - \mathbf{m}_c \|^2 \\
  & \text{   where } \mathbf{m}_c = \frac{\sum_{i=1}^n u_{ic} w_i \phi(\vx_i)}{\sum_{i=1}^n u_{ic} w_i}\\ \addlinespace
  \text{subject to} & \trace(\mU^T \mU) = (1+\alpha) n, \\
  & \textstyle\sum_{i=1}^n \mathbbm{I}\{ (\mU\mathbf{1})_i = 0 \} \leq \beta n.
\end{array}
\end{equation}
This objective function implies that $(1+\alpha)n$ assignments are made while minimizing the sum of the squared distances between a data point and its cluster center. Also notice that at most $\beta n$ data points are allowed to have no membership in any cluster. If $\alpha = 0$ and $\beta = 0$, then this objective is equivalent to the classical weighted kernel $k$-means objective. Some guidelines about how to select $\alpha$ and $\beta$ have been described in~\cite{whang-2015-neo}. To optimize the objective function (\ref{eq:neo}), a simple iterative algorithm has also been proposed in \cite{whang-2015-neo}. However, the simple iterative algorithm tends to get stuck at a local optimum that can be far away from the global optimum, like the standard $k$-means algorithm~\cite{lloyd}.

The following optimization problem is a non-convex relaxation of the NEO-K-Means problem that was developed in our previous work~\cite{Hou-Whang-2015-lrsdp-neo}. We call it the low-rank \sdp based on its derivation as a low-rank heuristic for solving large-scale SDPs. We introduce a bit of notation to state the problem. Let $\mK$ be a standard kernel matrix ($K_{ij} = \phi(\vx_i)^T \phi(\vx_j)$), let $\mW$ denote a diagonal weight matrix such that $W_{ii}=w_i$ indicates the weight of data point $i$, and let $\vd$ denote a vector of length $n$ where $d_i = w_iK_{ii}$. In terms of the solution variables, let $\vf$ be a length $n$ vector where $f_i$ is a real-valued count of the number of clusters data point $i$ is assigned to, and let $\vg$ be a length $n$ vector where $g_i$ is close to 0 if $i$ should not be assigned to any cluster and $g_i$ is close to 1 if $i$ should be assigned to a cluster.  The solution matrix $\mY$ represents a relaxed, normalized assignment matrix where $Y_{ij}$ indicates that data point $i$ should be in cluster $j$ with columns normalized by the cluster size. The low-rank \sdp optimization problem for (\ref{eq:neo}) is then
\begin{equation} \label{eq:neo-lr}
\begin{array}{lll}
\underset{\mY, \vf, \vg, \vs, r}{\text{minimize}} & \vf^T \vd  - \trace(\mY^T \mK \mY)  \\[1ex]
\text{subject to} 
  & k = \trace(\mY^T \mW^{-1} \mY) & (a) \\
	& 0 = \mY \mY^T \ve - \mW \vf & (b) \\
	& 0= \ve^T \vf - (1+\alpha) n & (c)\\
	& 0 = \vf - \vg - \vs & (d) \\
	& 0 = \ve^T \vg - (1-\beta) n - r & (e) \\
	& Y_{i,j} \ge 0, \vs \ge 0, r \ge 0 \\
	&  0 \le \vf \le k \ve, 0 \le \vg \le 1
\end{array}
\end{equation}
where $\vs$ and $r$ are slack variables to convert the inequality constraints into equality constraints. The objective function is derived following a standard kernelized conversion. Constraint (a) gives the normalization condition on the variable $\mY$ to normalize for cluster-size; constraint (b) requires that the number of assignments listed in $\vf$ corresponds to the number in the solution matrix $\mY$; constraint (c) bounds the total number of assignments as $(1+\alpha) n$; constraint (d) is equivalent to $\vf \ge \vg$; and constraint (e) enforces the number of assigned data points to be at least $(1-\beta)n$; the remaining bound constraints enforce simple non-negativity and upper-bounds on the number of cluster assignments. 
We will discuss how to solve the low-rank \sdp problem in the next section.

\section{The Augmented Lagrangian Method for the NEO-K-Means Low-Rank SDP}
\label{sec:lrsdp}
To solve the low-rank \sdp problem~\eqref{eq:neo-lr}, the classical augmented Lagrangian method (ALM) has been used in~\cite{Hou-Whang-2015-lrsdp-neo}. The augmented Lagrangian technique is an iterative process where each iteration is done by minimizing an augmented Lagrangian problem that includes a current estimate of the Lagrange multipliers for the constraints as well as a quadratic penalty term that enforces the feasibility of the solution. We introduce it here because we will draw heavily on the notation for our subsequent results.



Let $\vec{\lambda} = [\lambda_1; \lambda_2; \lambda_3]$ denote the Lagrange multipliers for the three scalar constraints $(a), (c), (e)$. For the vector constraints $(b)$ and $(d)$, let $\vec{\mu}$ and $\vec{\gamma}$ denote the corresponding Lagrange multipliers, respectively. Let $\sigma$ be a positive penalty parameter. Then, the augmented Lagrangian for~\eqref{eq:neo-lr} is:
\begin{align*}
 & \mathcal{L}_A (Y, \vf, \vg, \vs, r;  \vec{\lambda}, \vec{\mu}, \vec{\gamma}, \sigma) = \\
 & \qquad \underbrace{\vf^T\vd - \trace(\mY^T\mK\mY)}_{\text{the objective}} \\
 & \qquad - \lambda_1 (\trace(\mY^T \mW^{-1} \mY)-k) \\
 & \qquad \qquad + \frac{\sigma}{2} (\trace(\mY^T \mW^{-1} \mY)-k)^2\\
 & \qquad -\vec{\mu}^T( \mY\mY^T \ve - \mW \vf) \\
 & \qquad \qquad + \frac{\sigma}{2} ( \mY\mY^T \ve - \mW \vf)^T ( \mY\mY^T \ve - \mW \vf) \\
 & \qquad -\lambda_2 (\ve^T\vf- (1+\alpha)n) \\
 & \qquad \qquad + \frac{\sigma}{2} (\ve^T\vf- (1+\alpha)n)^2 \\
 & \qquad -\vec{\gamma}^T (\vf- \vg - \vs) \\
 & \qquad \qquad +  \frac{\sigma}{2} (\vf-\vg - \vs)^T(\vf-\vg -\vs) \\
 & \qquad -\lambda_3 (\ve^T\vg -(1-\beta)n - r) \\
 & \qquad \qquad + \frac{\sigma}{2} (\ve^T\vg -(1-\beta)n - r)^2
\end{align*}


At each iteration of the augmented Lagrangian framework, the following subproblem is solved:
\begin{equation}
\begin{aligned}
& {\text{minimize}}
& & \mathcal{L}_{\mathcal{A}} (\mY, \vf, \vg, \vs, r;  \vec{\lambda}, \vec{\mu}, \vec{\gamma}, \sigma)   \\
& \text{subject to}
& & Y_{i,j} \ge 0, \vs \geq 0, r \geq 0, \\
&&& 0 \leq  \vf \leq k\ve, 0 \leq \vg \leq 1.
\end{aligned}
\end{equation}
To minimize the subproblem with respect to the variables $\mY$, $\vf$, $\vg$, $\vs$, and $r$, we can use a limited-memory BFGS with bound constraints algorithm \cite{Byrd-1995-lbfgsb}. In~\cite{Hou-Whang-2015-lrsdp-neo}, it has been shown that this technique produces reasonable solutions for the NEO-K-Means objective. In particular, when the clustering performance is evaluated on real-world datasets, this technique has been shown to be effective in finding the ground-truth clusters. Furthermore, by optimizing the weighted kernel NEO-K-Means, this technique is also able to find cohesive overlapping communities in real-world networks. The empirical success of the augmented Lagrangian framework motivates us to investigate developing faster solvers for the NEO-K-Means low-rank SDP problem, which will be discussed in the next section.


\section{Fast Multiplier Methods for the NEO-K-Means Low-Rank SDP}
\label{sec:methods}
There is a resurgence of interest in proximal point methods and alternating methods for convex and nearly convex objectives in machine learning due to their fast convergence rate. Here we propose two variants of the classical augmented Lagrangian approach on problem~\eqref{eq:neo-lr} that can utilize some of these techniques for improved speed.  

\subsection{Proximal Augmented Lagrangian (PALM).}
\label{sec:palm}
The proximal augmented Lagrangian method differs from the classical augmented Lagrangian method only in an additional proximal regularization term for primal updates. This can be considered as a type of simultaneous primal-dual proximal-point step that helps to regularize the subproblems solved at each step. This idea leads to the following iterates:
\begin{displaymath}
\vx^{k+1}  = \underset{\vx \in \mathcal{C} } {\mathrm{arg min}}  \mathcal{L}_{\mathcal{A}} (\vx;  \vec{\lambda}^k, \vec{\mu}^k, \vec{\gamma}^k, \sigma)  + \frac{1}{2\tau} \| \vx - \vx^k\|^2
\end{displaymath}
where $\vx$ represents $[\vy;\vf;\vg;\vs;r]$ for simplicity with $\vy = \mY(:)$ vectorized by column. Then we update the multipliers $\vec{\lambda}, \vec{\mu}, \vec{\gamma}$ as in the classical augmented Lagrangian. We may also need to update the penalty parameter $\sigma$ and the proximal parameter $\tau$ respectively.

We use a limited-memory BFGS with bound constraints to solve the new subproblem with respect to the variable $\vx$. If we let $\tau = \sigma$, this special case is called proximal method of multipliers, first introduced in \cite{Rockafellar-1976-AugLagProx}. The proximal method of multipliers has better theoretical convergence guarantees for convex optimization problems (compared with the augmented Lagrangian)~\cite{Rockafellar-1976-AugLagProx}. In this non-convex setting, we believe it is likely to help to improve conditioning of the Hessian's in the subproblems and thus reduce the solution time for each subproblem. And this is indeed what we find.

\subsection{Alternating Direction Method of Multipliers (ADMM).}
\label{sec:admm}

There are four sets of variables in problem~\eqref{eq:neo-lr} ($\mY$, $\vf$, $\vg$ and slack variables). We can use this structure to break the augmented Lagrangian function into smaller subproblems for each set of variables. Some of these subproblems are then easier to solve. For example, updating variable $\vf$ alone is a simple convex problem, thus it is very efficient to have a globally optimal solution. The alternating direction method of multipliers approach of updating block variables $\mY$, $\vf$, $\vg$, $\vs$ and $r$ respectively, utilizes this property, which leads to the following iterates:
\begin{align*}
\mY^{k+1} & = \underset{\mY} {\mathrm{arg min}} \mathcal{L}_{\mathcal{A}} (\mY, \vf^k,\vg^k, \vs^k, r^k;\\
& \vec{\lambda}^k, \vec{\mu}^k, \vec{\gamma}^k, \sigma) \\
\vf^{k+1} & = \underset{\vf} {\mathrm{arg min}} \mathcal{L}_{\mathcal{A}} (\mY^{k+1}, \vf,\vg^k, \vs^k, r^k; \\
& \vec{\lambda}^k, \vec{\mu}^k, \vec{\gamma}^k, \sigma) \\
\vg^{k+1} & = \underset{\vg} {\mathrm{arg min}} \mathcal{L}_{\mathcal{A}} (\mY^{k+1}, \vf^{k+1},\vg, \vs^k, r^k;\\
& \vec{\lambda}^k, \vec{\mu}^k, \vec{\gamma}^k, \sigma) \\
\vs^{k+1} & = \underset{\vs} {\mathrm{arg min}} \mathcal{L}_{\mathcal{A}} (\mY^{k+1}, \vf^{k+1},\vg^{k+1}, \vs, r^k; \\
& \vec{\lambda}^k, \vec{\mu}^k, \vec{\gamma}^k, \sigma) \\
r^{k+1} & = \underset{r} {\mathrm{arg min}} \mathcal{L}_{\mathcal{A}} (\mY^{k+1}, \vf^{k+1},\vg^{k+1}, \vs^{k+1}, r; \\
& \vec{\lambda}^k, \vec{\mu}^k, \vec{\gamma}^k, \sigma)
\end{align*}
then the multipliers $\vec{\lambda}$, $\vec{\mu}$, $\vec{\gamma}$ and the penalty parameter $\sigma$ are updated accordingly. 

We expect that this strategy will aid convergence because it decouples the update of $\mY$ from the update of $\vf$. In the problem with all variables, the interaction of these terms has the strongest non-convex interaction.  We now detail how we solve each of the subproblems.

\textbf{Update $\mY$}. We use a limited-memory BFGS with bound constraints to solve the subproblem with respect to the variables $\mY$ since it is non-convex.  

\textbf{Update $\vf$ and $\vg$}. The update for $\vf$ and $\vg$ respectively both have the following general form:
\begin{equation} \label{eq:quadratic}
\begin{aligned}
& \underset{\vx}{\text{minimize}}
& & f(\vx) = \vx^T \va  + \frac{\sigma}{2} \vx^T \mD \vx + \frac{\sigma}{2} (\ve^T\vx)^2\\
& \text{subject to}
& & 0 \leq \vx \leq b \ve\\
\end{aligned}
\end{equation}
where $\ve$ is the vector of all 1s and $\mD$ is a positive diagonal matrix. To solve this, we use ideas similar to~\cite[S6.2.5]{Parikh-2014-prox}. 
Let $\tau = \ve^T \vx$, thus $0 \leq \tau \leq bn$. We solve this problem by finding roots of the following function $F(\tau)$: 
\begin{displaymath}
F(\tau) = \tau - \ve^T P[-\tfrac{1}{\sigma} \mD^{-1} (\va + \sigma\tau \ve); 0, b]
\end{displaymath}
where the function $P[\vx; 0, b]$ projects the point $\vx$ onto the rectangular box $[0, b]$. (To find these roots, bisection suffices because $F(0) \leq 0$ and $F(bn) \geq 0$.) This is a globally optimal solution by the following lemma.

\begin{lemma}
$\vx^* = P[-\frac{1}{\sigma} \mD^{-1} (\va + \sigma \tau^* \ve);0, b]$, where $\tau^*$ is the root of $F(\tau)$, satisfies the first order KKT conditions: $\vx^* - P[\vx^* - \nabla f(\vx^*); 0, b] = 0$ (this form is given in equation 17.51 of~\cite{Nocedal-2006-optimization}).
\end{lemma}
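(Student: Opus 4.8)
The plan is to verify the stated fixed‑point KKT identity by reducing it to coordinatewise sign conditions and then checking those directly from the explicit form of $\vx^*$. Write $f$ for the objective of~\eqref{eq:quadratic}. Since $\mD$ is diagonal (hence symmetric), $\nabla f(\vx) = \va + \sigma\mD\vx + \sigma(\ve^T\vx)\ve$. Because the feasible set $\{\vx : 0\le\vx\le b\ve\}$ is a Cartesian product of intervals, the projection $P[\cdot;0,b]$ acts separately on each coordinate, so the equation $\vx^* - P[\vx^* - \nabla f(\vx^*);0,b] = 0$ decouples across coordinates and is equivalent to the familiar complementarity conditions: for each $i$, $[\nabla f(\vx^*)]_i\ge 0$ if $x_i^*=0$; $[\nabla f(\vx^*)]_i\le 0$ if $x_i^*=b$; and $[\nabla f(\vx^*)]_i=0$ if $0<x_i^*<b$. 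Thus it suffices to establish these three cases.

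Next I would exploit the defining property of $\tau^*$. Since $\tau^*$ is a root of $F$, we have $\tau^* = \ve^T P[-\tfrac1\sigma\mD^{-1}(\va+\sigma\tau^*\ve);0,b] = \ve^T\vx^*$; that is, the scalar $\tau^*$ equals the coordinate sum of the candidate $\vx^*$. Substituting $\ve^T\vx^* = \tau^*$ into the gradient yields the closed form $\nabla f(\vx^*) = \va + \sigma\mD\vx^* + \sigma\tau^*\ve$, whose $i$‑th entry is $a_i + \sigma D_{ii}x_i^* + \sigma\tau^*$.

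Then I would run the coordinatewise case analysis, using $x_i^* = P\bigl[-\tfrac1{\sigma D_{ii}}(a_i+\sigma\tau^*);0,b\bigr]$. If $x_i^*=0$, the projection clips the argument from below, so $a_i+\sigma\tau^*\ge 0$, and then $[\nabla f(\vx^*)]_i = a_i + 0 + \sigma\tau^* \ge 0$. If $x_i^*=b$, the projection clips from above, so $a_i+\sigma\tau^* \le -\sigma D_{ii}b$, and then $[\nabla f(\vx^*)]_i = a_i + \sigma D_{ii}b + \sigma\tau^* \le 0$. If $0<x_i^*<b$, the projection is inactive, so $x_i^* = -\tfrac1{\sigma D_{ii}}(a_i+\sigma\tau^*)$, equivalently $a_i+\sigma D_{ii}x_i^*+\sigma\tau^* = 0$, i.e.\ $[\nabla f(\vx^*)]_i = 0$. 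This establishes all three conditions, hence the claimed identity.

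There is no serious obstacle here; the lemma is essentially a bookkeeping verification. The one point that deserves care is the initial reduction — that, for a box constraint, the step‑size‑one projected‑gradient fixed point is genuinely equivalent to the complementarity conditions — which relies on the box being a product of intervals so that the fixed‑point equation separates coordinatewise and the clip operation encodes exactly the sign conditions on $[\nabla f(\vx^*)]_i$. For the surrounding discussion one may additionally note that $F$ is nondecreasing (the projected term is nonincreasing in $\tau$), so $F(0)\le 0\le F(bn)$ indeed isolates a root found by bisection; but this is not needed for the lemma itself.
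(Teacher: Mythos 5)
Your proposal is correct and follows essentially the same route as the paper: both arguments use the fact that $F(\tau^*)=0$ forces $\tau^*=\ve^T\vx^*$ and then run the same coordinatewise three-case analysis ($x_i^*=0$, $x_i^*=b$, $0<x_i^*<b$) on the box projection; the only cosmetic difference is that you first translate the projected-gradient fixed-point equation into sign/complementarity conditions, whereas the paper evaluates $x_i^*-P[x_i^*-\nabla f(\vx^*)_i;0,b]$ directly in each case. Your explicit remark that $\tau^*=\ve^T\vx^*$ is a point the paper leaves implicit, and is a welcome clarification.
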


\begin{proof}
We have three cases: $x^*_i = 0$; $x^*_i = b$; and $0 < x^*_i < b$ for any $i$. 

For $x^*_i = 0$, which means $a_i + \sigma\tau \ge 0$, we have 
\begin{equation*}
\begin{aligned}
& x^*_i  - P[x^*_i - (a_i + \sigma d_i x^*_i + \sigma \tau); 0, b]  \\
= \quad & - P[-a_i - \sigma \tau; 0, b] = 0.
\end{aligned}
\end{equation*}

For $x^*_i = b$, which means $-(a_i + \sigma\tau)/ (\sigma d_i)  \geq b$, we have 
\begin{equation*}
\begin{aligned}
& x^*_i  - P[x^*_i - (a_i + \sigma d_i x^*_i + \sigma \tau); 0, b] \\
= \quad & b  - P[ b -(a_i + \sigma d_i b + \sigma \tau); 0, b] = b - b = 0.
\end{aligned}
\end{equation*}

For $0 < x^*_i < b$, which means $x^*_i  = -(a_i + \sigma\tau)/ (\sigma d_i) $, we have 
\begin{equation*}
\begin{aligned}
& x^*_i  - P[x^*_i - (a_i + \sigma d_i x^*_i + \sigma \tau); 0, b] \\
= \quad& x^*_i - P[x^*_i; 0, b] =x^*_i - x^*_i =  0. \qquad \qquad \blacksquare
\end{aligned}
\end{equation*}
\end{proof}

\textbf{Update $\vs$ and $r$}. These updates just require solving one variable quadratic optimization with simple bound constraints; the result is a simple update procedure.

\section{Convergence Analysis of the Proximal Augmented Lagrangian}
\label{sec:conv}
We use both the proximal augmented Lagrangian and the ADMM strategy on the problem without any convexity. For these cases, local convergence is the best we can achieve. We now establish a general convergence result for the proximal augmented Lagrangian with bound constraints. We observed empirical convergence of the ADMM method, but currently lack any theoretical guarantees.
 
From Pennanen~\cite{Pennanen-2002-proxMOR}, we know that the proximal method of multipliers is locally convergent for a general class of problems with sufficient assumptions.  We will show that our proximal method of multipliers algorithm applied to~\eqref{eq:neo-lr} can be handled by their approach and we extend their analysis to our case. Because we are imitating the analysis from Pennanen for a specific new problem, we decided to explicitly mimic the original language to highlight the changes in the derivation. Thus, there is a high degree of textual overlap between the following results and~\cite{Pennanen-2002-proxMOR}.

First, we state some notation and one useful fact. The indication function $\delta_\mathcal{C}$ of a set $\mathcal{C}$ in Hilbert Space $\mathcal{H}$ has value 0 for $x \in \mathcal{C}$ and $+\infty$ otherwise. The subdifferential of $\delta_\mathcal{C}$ is the normal cone operator of $\mathcal{C}$: $N_{\mathcal{C}} (\vx) = \{ \vv \in \mathcal{H} |  \langle \vv, \vy - \vx \rangle \leq 0, \forall \vy \in \mathcal{C} \}.$

\begin{proposition}
\label{prop}
 Let $\bar{\vx}$ be a solution to problem of minimizing $f(\vx)$ on $\mathcal{C}$ and suppose $f$ is differentiable at $\bar{\vx}$,  then 
\[
\nabla f(\bar{\vx}) + N_{\mathcal{C}} (\bar{\vx}) \ni  0.
\]
\end{proposition}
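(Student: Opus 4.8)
The plan is to reduce this to the standard first-order necessary optimality condition for constrained minimization over a convex set, phrased in terms of normal cones. Since $\bar{\vx}$ minimizes $f$ over $\mathcal{C}$, it in particular minimizes the unconstrained function $f(\vx) + \delta_\mathcal{C}(\vx)$ over all of $\mathcal{H}$. Fermat's rule then says $0 \in \partial\bigl(f + \delta_\mathcal{C}\bigr)(\bar{\vx})$, and the goal is to push the subdifferential of the sum inside to get $\nabla f(\bar{\vx}) + \partial\delta_\mathcal{C}(\bar{\vx}) = \nabla f(\bar{\vx}) + N_\mathcal{C}(\bar{\vx})$, using the fact (already recorded just above the proposition) that the subdifferential of $\delta_\mathcal{C}$ is the normal cone operator $N_\mathcal{C}$.

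First I would argue directly from the definition of a minimizer rather than invoking a sum rule, which avoids any constraint-qualification subtleties. For any $\vy \in \mathcal{C}$ and any $t \in (0,1]$, convexity of $\mathcal{C}$ gives $\bar{\vx} + t(\vy - \bar{\vx}) \in \mathcal{C}$, so optimality yields $f(\bar{\vx} + t(\vy-\bar{\vx})) \ge f(\bar{\vx})$. Dividing by $t$ and letting $t \downarrow 0$, differentiability of $f$ at $\bar{\vx}$ gives the directional-derivative inequality $\langle \nabla f(\bar{\vx}), \vy - \bar{\vx}\rangle \ge 0$ for all $\vy \in \mathcal{C}$. Rewriting this as $\langle -\nabla f(\bar{\vx}), \vy - \bar{\vx}\rangle \le 0$ for all $\vy \in \mathcal{C}$ is exactly the statement $-\nabla f(\bar{\vx}) \in N_\mathcal{C}(\bar{\vx})$, by the definition of the normal cone given in the paper. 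Hence $0 \in \nabla f(\bar{\vx}) + N_\mathcal{C}(\bar{\vx})$, which is the claim.

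I do not anticipate a serious obstacle here; the only point requiring a little care is passing from the finite-difference quotient to the derivative, which needs $\mathcal{C}$ convex (so the feasible segment lies in $\mathcal{C}$) and $f$ differentiable at $\bar{\vx}$ (so the limit of the quotient is $\langle \nabla f(\bar{\vx}), \vy-\bar{\vx}\rangle$) — both hypotheses are explicitly assumed. One should also note that $\mathcal{C}$ in our setting is the box/nonnegativity constraint set from~\eqref{eq:neo-lr}, which is closed and convex, so the normal cone is well defined and the argument applies verbatim. The proof is therefore short: set up the perturbation $\bar{\vx} + t(\vy - \bar{\vx})$, take the limit, and read off the normal-cone inclusion.
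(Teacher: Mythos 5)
Your proof is correct and follows essentially the same route as the paper: both reduce the normal-cone inclusion to the variational inequality $\langle \nabla f(\bar{\vx}), \vy - \bar{\vx}\rangle \ge 0$ for all $\vy \in \mathcal{C}$, which is the definition of $-\nabla f(\bar{\vx}) \in N_\mathcal{C}(\bar{\vx})$. The only difference is that you explicitly supply the standard line-segment argument (convexity of $\mathcal{C}$ plus differentiability of $f$ at $\bar{\vx}$) establishing that a minimizer satisfies this inequality, whereas the paper treats that step as known and only records the equivalence.
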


\begin{proof}
We need to show that $\nabla f(\bar{\vx}) + N_{\mathcal{C}} (\bar{\vx}) \ni  0$  is equivalent to $
\nabla f(\bar{\vx})^T (\vy - \bar{\vx}) \geq 0 \quad \text{for all } \vy \in \mathcal{C}$, which is clear from the definition of the normal cone. $\blacksquare$ \qquad \end{proof} 

To simplify the convergence behavior analysis of the  proximal method of multipliers on~\eqref{eq:neo-lr}, we generalize the optimization problem in the following form:
\begin{equation} \label{eq:general-form}
\begin{aligned}
& \underset{\vx}{\text{minimize}}
& & f(\vx) \\
& \text{subject to}
& & c(\vx) = \vec{0}, \\
& & &\vl \leq \vx \leq \vu 
\end{aligned}
\end{equation}
where $f(\vx)$ and $c(\vx)$ are continuous and differentiable.  Let $\mathcal{C}$ be the closed convex sets corresponding to simple bound constrains $\vl \leq \vx \leq \vu$. 

The Lagrangian and augmented Lagrangian function are defined respectively as follows:
\begin{displaymath}
\mathcal{L} (\vx, \vec{\lambda}) = f(\vx) + \vec{\lambda}^Tc(\vx)
\end{displaymath}
\begin{displaymath}
\mathcal{L}_{\mathcal{A}} (\vx, \vec{\lambda}, \sigma) = f(\vx) + \vec{\lambda}^Tc(\vx) + \frac{\sigma}{2} \| c(\vx) \|^2.
\end{displaymath}
The multipliers $\vec{\lambda}$ can be added or subtracted. We choose adding the multipliers here in order to be consistent with the analysis in \cite{Pennanen-2002-proxMOR}.

A point $(\bar{\vx}, \bar{\vec{\lambda}})$ is said to satisfy the strong second-order sufficient condition~\cite{Robinson-1980-strongreg} for problem~\eqref{eq:general-form} if there is a $\rho \in \mathcal{R}$ such that 
\begin{equation} \label{eq:ssosc}
\begin{aligned}
 & \langle \vec{\omega}, \nabla_{xx}^2 \mathcal{L}(\bar{\vx}, \bar{\vec{\lambda}})\vec{\omega} \rangle + \rho \sum_{i}\langle \nabla c_i (\bar{\vx}), \vec{\omega} \rangle^2 > 0 \\
 &\quad \forall \vec{\omega} \in T_\mathcal{C} (\bar{\vx})/\{0\} 
 \end{aligned}
\end{equation}
where $T_\mathcal{C} (\vx)$ is the tangent cone of $\mathcal{C}$ at point $\vx$. 

We describe the proximal method of multipliers for the general form of problem~\eqref{eq:general-form} in Algorithm~\ref{algo}.
\begin{algorithm}[t] \label{alg-ppm}
\caption{Proximal Method of Multipliers}
\label{algo}
\begin{algorithmic}[1]
\State \text{Input: Choose $\vx_0$, $\vec{\lambda}_0$, set $k = 0$.} 
\State $\text{Repeat}$
\State $ \quad \vx_{k+1}  \coloneqq \underset{\vx \in \mathcal{C} } {\mathrm{arg min}} ~ \mathcal{L}_{\mathcal{A}} (\vx,  \vec{\lambda}_k, \sigma_k)$ \\ $\quad \quad \quad \quad \qquad \qquad + \frac{1}{2\sigma_k} \| \vx - \vx_k\|^2 \quad \quad (P^k)$
\State $ \quad \vec{\lambda}_{k+1} \coloneqq \vec{\lambda}_k + \sigma_k c(\vx_{k+1})$ 
\State $ \quad k \coloneqq k+1$ 
\State \text{Until converged}
\end{algorithmic}
\end{algorithm}


\begin{theorem}\label{theorem:1}
 Let ($\bar{\vx}$, $\bar{\vec{\lambda}}$) be a KKT pair for problem~\eqref{eq:general-form} satisfying the strongly second order sufficient condition and assume the gradients $\nabla c(\bar{\vx})$ are linearly independent. If the $\{\sigma_k\}$ are large enough with $\sigma_k \to \bar{\sigma} \leq \infty$ and if $\| (\vx_0, \vec{\lambda}_0) - (\bar{\vx}, \bar{\vec{\lambda}}) \|$ is small enough, then there exists a sequence $\{ (\vx_k, \vec{\lambda}_k)\}$ conforming to Algorithm~\ref{algo} along with open neighborhoods $\mathcal{C}_k$ such that for each $k$, $\vx_{k+1}$ is the unique solution in $\mathcal{C}_k$ to $(P^k)$. Then also, the sequence $\{ (\vx_k, \vec{\lambda}_k)\}$ converges linearly and Fej\'er monotonically to $\bar{\vx}$, $\bar{\vec{\lambda}}$ with rate $r(\bar{\sigma}) < 1$ that is decreasing in $\bar{\sigma}$ and $r(\bar{\sigma}) \to 0$ as $\bar{\sigma} \to \infty$.
\end{theorem}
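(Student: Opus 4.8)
The idea is to recognize Algorithm~\ref{algo} as the proximal point algorithm applied to the set-valued KKT operator of problem~\eqref{eq:general-form}, and then to invoke the local convergence theory of Pennanen~\cite{Pennanen-2002-proxMOR} for the proximal point algorithm, checking only that the presence of the bound constraints does not spoil its hypotheses. On $\mathcal{H} = \RR^n \times \RR^m$ define
\[
T_\ell(\vx,\vec{\lambda}) = \begin{pmatrix} \nabla f(\vx) + \nabla c(\vx)\vec{\lambda} + N_{\mathcal{C}}(\vx) \\ -c(\vx) \end{pmatrix}.
\]
By Proposition~\ref{prop}, $0 \in T_\ell(\bar{\vx},\bar{\vec{\lambda}})$ says exactly that $(\bar{\vx},\bar{\vec{\lambda}})$ is a KKT pair of~\eqref{eq:general-form}: the first block is stationarity of $\mathcal{L}(\cdot,\bar{\vec{\lambda}})$ over $\mathcal{C}$ written with the normal cone, and the second block is primal feasibility $c(\bar{\vx})=\vec{0}$.

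First I would carry out the resolvent computation. Writing the first-order condition of the bound-constrained subproblem $(P^k)$ in the normal-cone form of Proposition~\ref{prop} and then substituting the multiplier update $\vec{\lambda}_{k+1} = \vec{\lambda}_k + \sigma_k c(\vx_{k+1})$ both into that relation and into its own defining equation, one obtains $\tfrac{1}{\sigma_k}\big((\vx_k,\vec{\lambda}_k)-(\vx_{k+1},\vec{\lambda}_{k+1})\big) \in T_\ell(\vx_{k+1},\vec{\lambda}_{k+1})$, i.e.\ $(\vx_{k+1},\vec{\lambda}_{k+1}) = (I + \sigma_k T_\ell)^{-1}(\vx_k,\vec{\lambda}_k)$. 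Thus one step of Algorithm~\ref{algo} is one proximal point step for $T_\ell$ with parameter $\sigma_k$ --- provided the minimizer selected in $(P^k)$ is the intended local one. This is where ``$\sigma_k$ large enough'' enters: a standard augmented-Lagrangian Hessian estimate shows that near $\bar{\vx}$ the objective of $(P^k)$ has Hessian $\nabla^2_{xx}\mathcal{L} + \sigma_k\,\nabla c\,\nabla c^T + \tfrac{1}{\sigma_k} I$, which by the strong second-order condition~\eqref{eq:ssosc} and linear independence of $\nabla c(\bar{\vx})$ becomes positive definite on the tangent cone $T_{\mathcal{C}}(\bar{\vx})$ once $\sigma_k$ exceeds a threshold; this yields the open neighborhoods $\mathcal{C}_k$ on which $\vx_{k+1}$ is the unique solution of $(P^k)$.

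Next I would invoke Pennanen's local convergence theorem: if $T_\ell$ is locally maximal monotone at $(\bar{\vx},\bar{\vec{\lambda}})$ for $0$ and $T_\ell^{-1}$ is Lipschitz continuous at $0$ with modulus $a$, then the iterates generated by $(I+\sigma_k T_\ell)^{-1}$ from a starting point near $(\bar{\vx},\bar{\vec{\lambda}})$ stay nearby and single valued, converge Fej\'er monotonically (by firm nonexpansiveness of the resolvent of a monotone operator) and linearly with rate $r(\bar{\sigma}) = a/\sqrt{a^2+\bar{\sigma}^2} < 1$, which decreases in $\bar{\sigma}$ and tends to $0$ as $\bar{\sigma}\to\infty$. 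It then remains to check the two hypotheses for our $T_\ell$. That $T_\ell^{-1}$ is Lipschitz at $0$ is exactly Robinson's strong regularity of the KKT system, which follows from the strong second-order sufficient condition together with the linear independence constraint qualification~\cite{Robinson-1980-strongreg}, and Robinson's result already covers canonically perturbed variational inequalities over a polyhedral set such as our box $\mathcal{C}$. For local maximal monotonicity I would write $T_\ell$ as the sum of the globally maximal monotone operator $(\vx,\vec{\lambda})\mapsto(N_{\mathcal{C}}(\vx),0)$ and the smooth saddle-type map $(\nabla_x\mathcal{L},-\nabla_\lambda\mathcal{L})$, whose derivative at $(\bar{\vx},\bar{\vec{\lambda}})$ restricted to $T_{\mathcal{C}}(\bar{\vx})\times\RR^m$ is positive semidefinite by~\eqref{eq:ssosc}; a localization together with a small linear correction then produces a monotone operator agreeing with $T_\ell$ near $(\bar{\vx},\bar{\vec{\lambda}})$, which is Pennanen's notion of local maximal monotonicity.

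The main obstacle is this last verification --- establishing local maximal monotonicity of $T_\ell$ with the bound-constraint term present. The delicacy is that~\eqref{eq:ssosc} controls $\nabla^2_{xx}\mathcal{L}$ only on the tangent cone $T_{\mathcal{C}}(\bar{\vx})$, not on all of $\RR^n$, so the localization of $T_\ell$ must be chosen compatibly with the bounds active at $\bar{\vx}$; this is precisely the point where Pennanen's argument, written for his constraint structure, has to be adapted, and it is why Proposition~\ref{prop} (the normal-cone form of stationarity) is isolated first. Once local maximal monotonicity and strong regularity are in hand, the remaining bookkeeping --- the neighborhoods $\mathcal{C}_k$, uniqueness of $\vx_{k+1}$, Fej\'er monotonicity, and the linear rate --- follows the corresponding steps of~\cite{Pennanen-2002-proxMOR} with only the adaptations above.
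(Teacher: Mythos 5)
Your proposal follows essentially the same route as the paper's proof: recast Algorithm~\ref{algo} as the proximal point iteration on the KKT operator containing the normal-cone term $N_{\mathcal{C}}$ (via Proposition~\ref{prop}), obtain the needed Lipschitz invertibility of that operator at $0$ from Robinson's strong regularity under SSOSC plus linear independence, invoke Pennanen's local convergence theorem, and finish by checking that the unique stationary point of $(P^k)$ is actually a minimizer using the second-order condition augmented by the $\tfrac{1}{\sigma_k}\|\vec{\omega}\|^2$ and $\sigma_k\sum_i\langle\nabla c_i,\vec{\omega}\rangle^2$ terms for $\sigma_k$ large. The only difference is one of emphasis: you make explicit the local-maximal-monotonicity verification that the paper absorbs into its direct citation of Pennanen's Theorem 17.
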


\begin{proof} (Note that the theorem and proof are revisions and specializations of Theorem 19 from~\cite{Pennanen-2002-proxMOR}.) By Robinson (1980, Theorem 4.1) \cite{Robinson-1980-strongreg}, the strongly second-order sufficient condition and the linear independence condition imply that the KKT system for~\eqref{eq:general-form} is strongly regular at $(\bar{\vx}, \bar{\vec{\lambda}})$. 

When we solve the subproblem $(P^k)$ with explicit bound constraints, from Proposition~\ref{prop}, we actually solve
\begin{displaymath}
\nabla f(\vx) + \frac{1}{\sigma_k} (\vx - \vx_k) + N_{\mathcal{C}} (\vx) + \nabla c(\vx)^*(\vec{\lambda}_k + \sigma_k c(\vx)) \ni 0.
\end{displaymath}
Thus, Algorithm~\ref{algo} is equivalent to Algorithm 3 in \cite{Pennanen-2002-proxMOR} (their general algorithm), and by Theorem 17 of \cite{Pennanen-2002-proxMOR}, we have the local convergence result stated in Theorem~\ref{theorem:1}. 

It remains to show that for large enough $\sigma_k$,  the unique stationary point is in fact a minimizer of $(P^k)$. We apply the second-order sufficient condition in 13.26 from \cite{Rockafellar-2009-variational} and the analogous derivation in the proof of Theorem 19 of \cite{Pennanen-2002-proxMOR}. Then a sufficient condition for $\vx_{k+1}$ to be a local minimizer of $(P^k)$ is that 
\begin{equation*} 
\begin{aligned}
 & \langle \vec{\omega}, \nabla_{xx}^2 \mathcal{L}(\vx_{k+1}, \vec{\lambda}_{k+1})\vec{\omega} \rangle + \frac{1}{\sigma_k} \| \vec{\omega} \|^2 + \\
& \qquad \sigma_k \sum_{i}\langle \nabla c_i (\vx_{k+1}), \vec{\omega} \rangle^2 > 0, \forall \vec{\omega} \in T_\mathcal{C} (\vx_{k+1})/\{0\}. 
 \end{aligned}
\end{equation*}
This condition holds by the continuity of $\nabla_{xx}^2 \mathcal{L}$ and $\nabla c_i$, and by~\eqref{eq:ssosc}, provided $\sigma_k$ is large enough. \hfill$\blacksquare$
\end{proof}

A main assumption for the analysis above is that we can solve the subproblem $(P^k)$ exactly. This was adjusted in \cite{Iusem-2003-inexact-prox}, which showed local convergence for approximate solutions of $(P^k)$. 

\section{Simplified Alternating Direction Method of Multipliers}
\label{sec:sadmm}
One downside to both of the proposed methods is that they involve using the L-BFGS-B method to solve the bound-constrained non-convex objectives in the substeps. This is a complex routine with a high runtime itself. In this section, we are interested in seeing if there are simplified ADMM variants that might futher improve runtime by avoiding this non-convex solver. 
This corresponds to, for example, inexact ADMM (allowing inexact primal alternating minimization solutions, e.g., one proximal gradient step per block).

In the ADMM method from Section~\ref{sec:admm}, we know that updating the block variables $\vf$, $\vg$, $\vs$ and $r$ is simple and convex, so we can get globally optimal solutions. The only hard part is to update $\mY$, which is non-convex. However, there are few results about convergence for ADMM in the non-convex case as well as the case with multiple blocks, i.e., more than two blocks of variables (e.g., $\mY$, $\vf$, $\vg$, $\vs$ and $r$) that would apply to this problem. For instance, in~\cite{admm-variant1}, it has been shown that an ADMM method does not converge for a multi-block case even for a convex problem.

In fact, in our preliminary investigations, many common variations on the ADMM methods did not yield any performance increase or resulted in slower performance or did not converge at all. For example, we tried to avoid the L-BFGS-B in the update for $\mY$ by simply using a step of projected gradient descent instead. We found the resulting Simplified ADMM (SADMM) method converges much slower than our ADMM method with the non-convex solver (more details are in Section~\ref{sec:karate}). The same experiment with multiple steps of projected gradient descent only performed worse.

Therefore, common accelerated variants of ADMM proposed for convex problems with two-block case do not necessarily improve the performance of ADMM in our problem. We believe that the NEO-K-means low-rank SDP problem will be a useful test case for future research in this area.

\section{Experimental Results}
\label{sec:results}
\begin{figure*}[htbp]
\centering
\begin{minipage}[b]{1\linewidth}\centering
  \centering
  \begin{tabular}{cc}
    \subfloat[The infinity norm of the constraints vector vs. Time]{\includegraphics[width=0.45\textwidth]{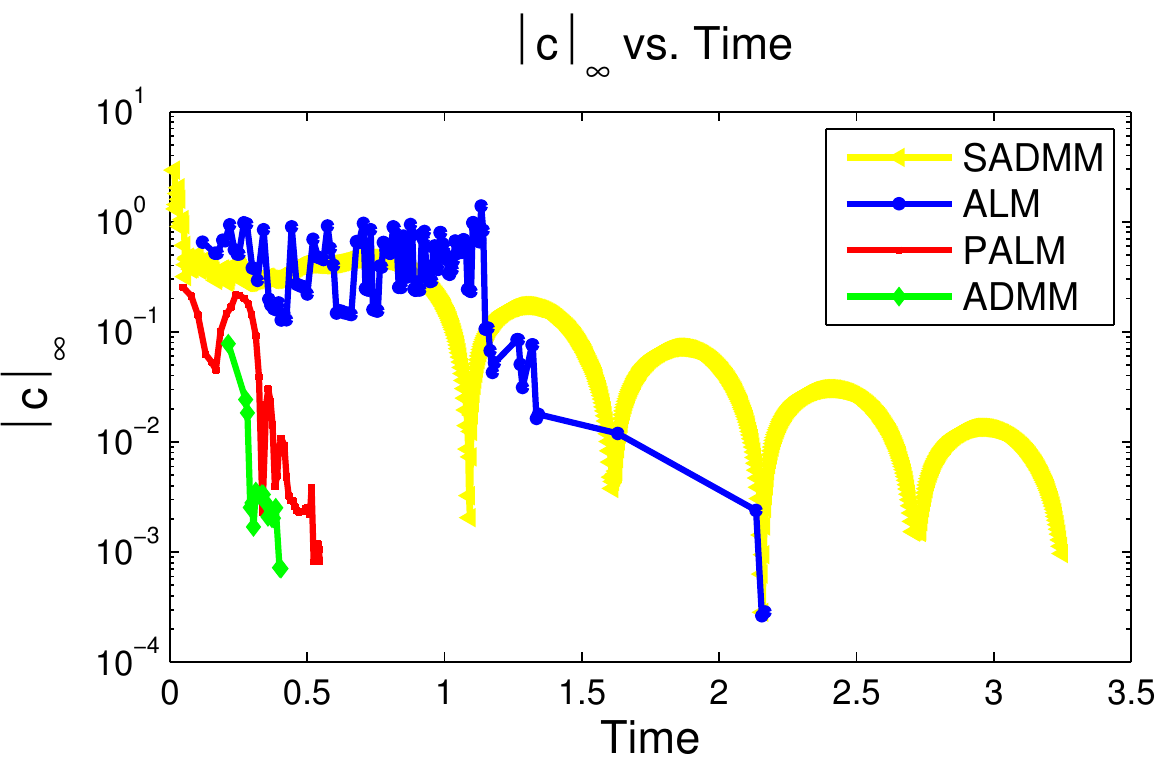}}&
    \subfloat[Objective function values vs. Time]{\includegraphics[width=0.45\textwidth]{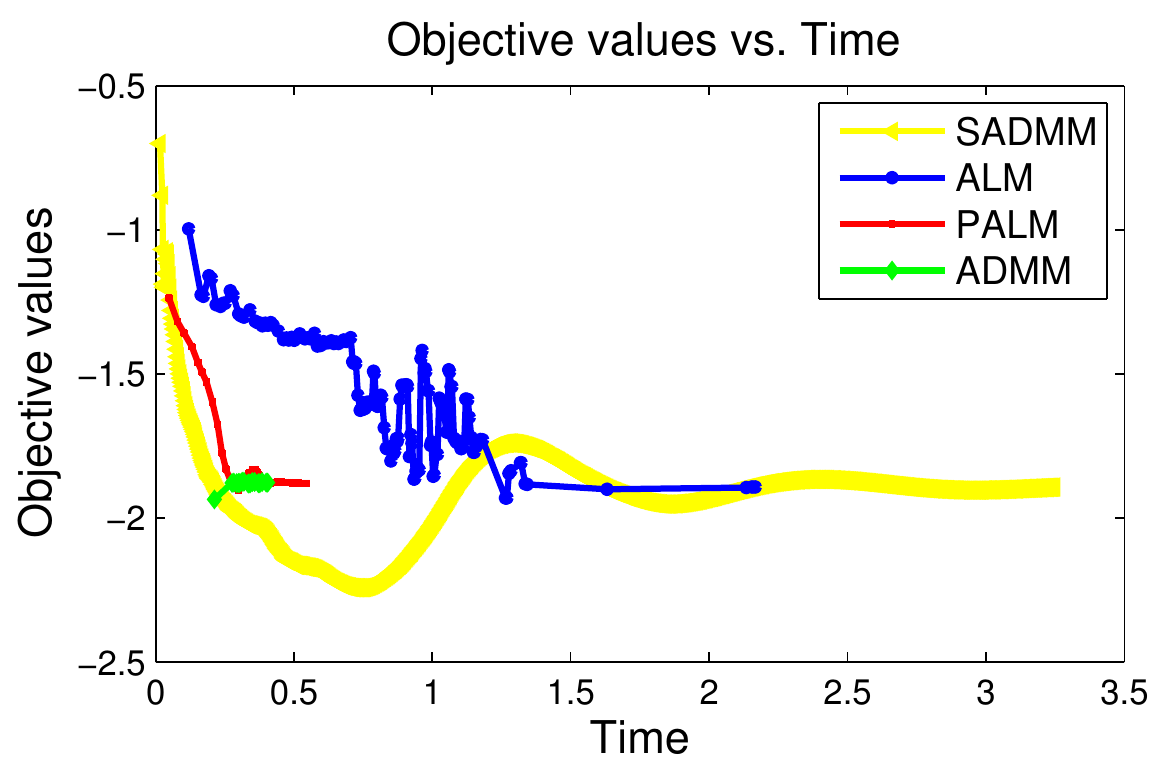}}
  \end{tabular}
\end{minipage}
\caption{The convergence behavior of ALM, PALM, ADMM and SADMM on a Karate Club network. PALM and ADMM converge faster than ALM while SADMM is much slower.}
\label{karate}
\vspace{-0.5cm}
\end{figure*}

In this section, we demonstrate the efficiency of our proposed methods on real-world problems. Our primary goal is to compare our two new methods, PALM and ADMM with the classical augmented Lagrangian method (denoted by ALM) in terms of their ability to optimize~\eqref{eq:neo-lr}. All these three algorithms are implemented in MATLAB and use the L-BFGS-B routine~\cite{Byrd-1995-lbfgsb} written in Fortran to solve the bound-constrained non-linear subproblems. 

\subsection{Convergence Analysis on the Karate Club Network.}
\label{sec:karate}
We first illustrate the convergence behavior of each of the methods on an overlapping community detection task in a graph. We use the Zachary's karate club network \cite{karate} which is a small social network among 34 members of a karate club.  

In Figure~\ref{karate}, (a) shows the infinity norm of the constraints vector and (b) shows the NEO-K-Means low-rank SDP objective function values defined in \eqref{eq:neo-lr} as time progresses respectively. We set the infeasibility tolerance to be less than $10^{-3}$. Both of our methods, PALM and ADMM, achieve faster convergence than ALM in terms of both the feasibility of the solution and the objective function value mainly because the subproblems for L-BFGS-B are faster to solve. To demonstrate that the common variants of ADMM do not accelerate the convergence in our problem, we also compare with the simplified alternating direction method of multipliers (Section~\ref{sec:sadmm}, denoted by SADMM). Note that for SADMM, we do not need to use L-BFGS-B to solve the subproblems, instead, we use one single gradient-descent step to have the solution inexactly. It is clear to see that SADMM is much slower than ADMM, and even slower than ALM.


\subsection{Data Clustering on Real-world Datasets.}

\begin{figure*}[htbp]
\centering
\begin{minipage}[b]{1\linewidth}\centering
  \centering
  \begin{tabular}{ccc}
    \subfloat[Objective values in~\eqref{eq:neo-lr} on YEAST]{\includegraphics[width=0.32\textwidth]{./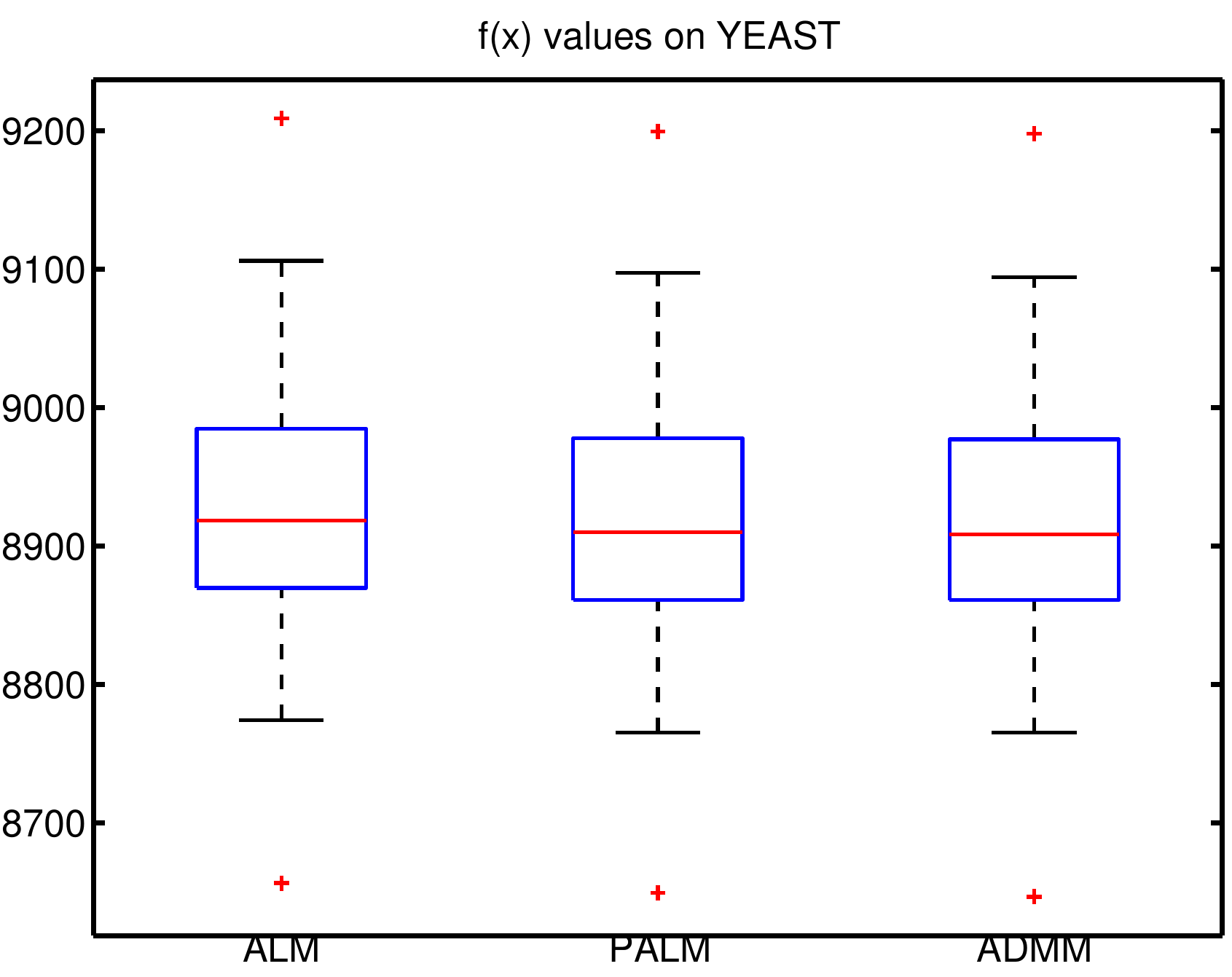}}&
    \subfloat[Objective values in~\eqref{eq:neo-lr} on SCENE]{\includegraphics[width=0.32\textwidth]{./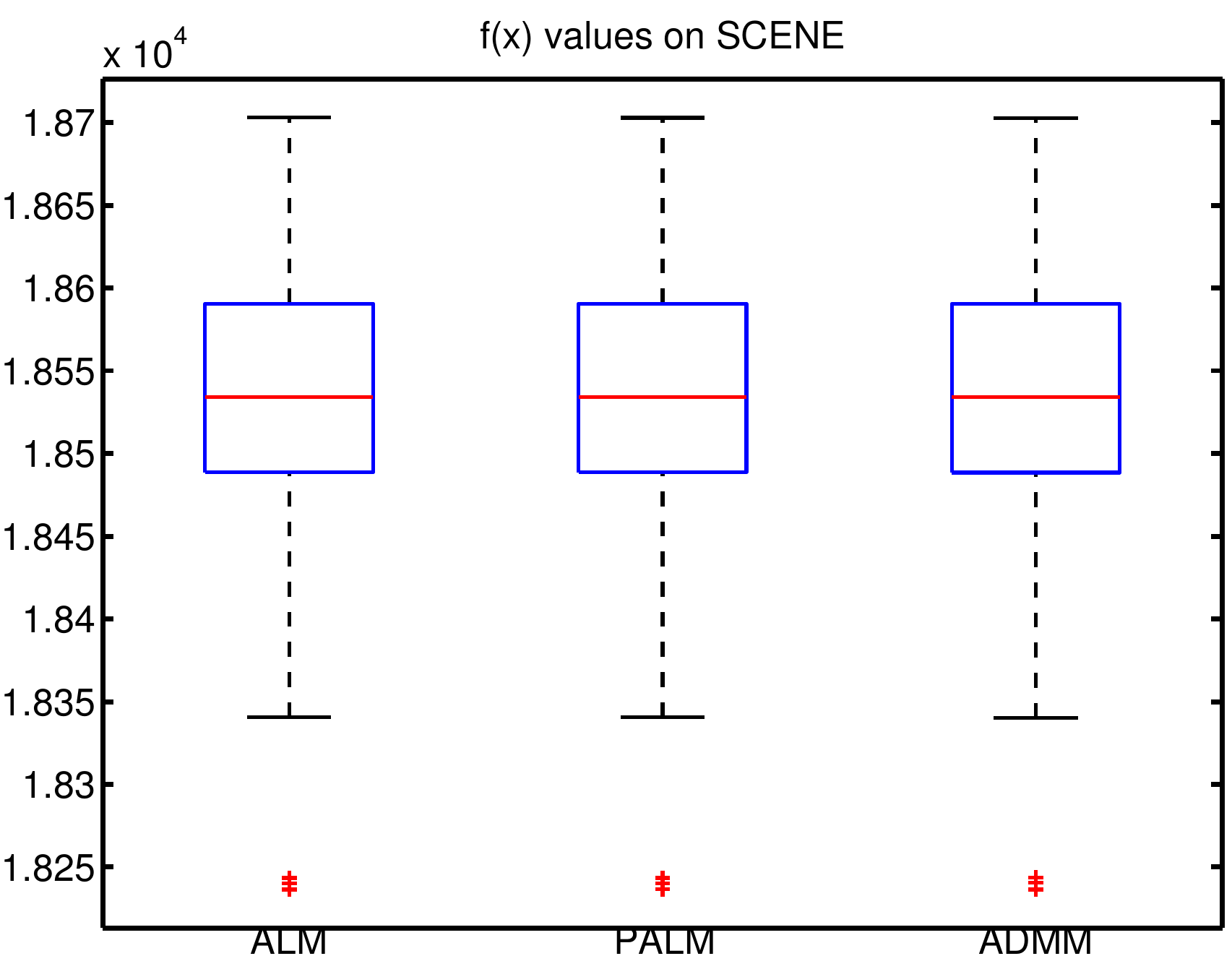}}&
    \subfloat[Objective values in~\eqref{eq:neo-lr} on MUSIC]{\includegraphics[width=0.32\textwidth]{./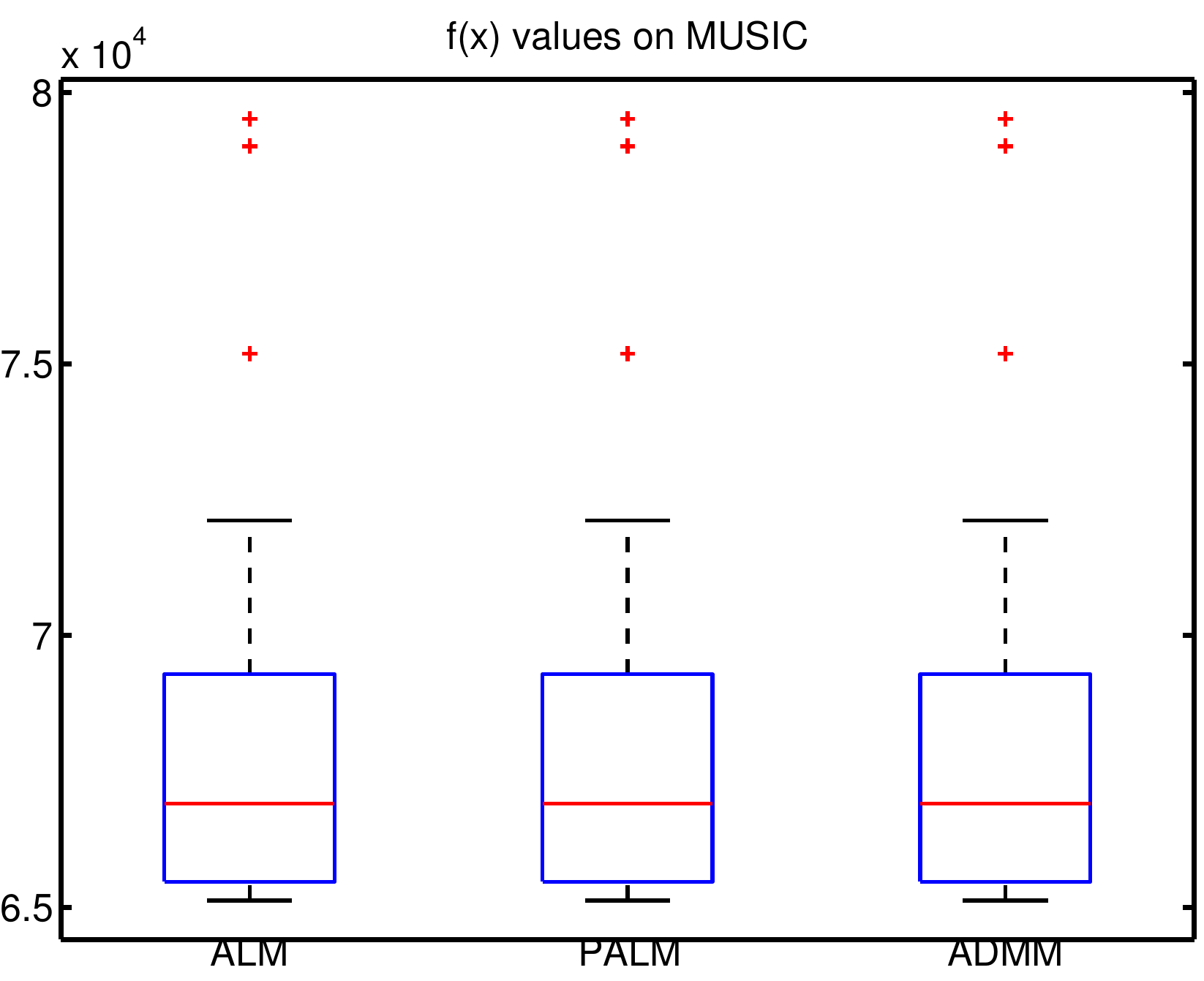}}
  \end{tabular}
\end{minipage}
\centering
\begin{minipage}[b]{1\linewidth}\centering
  \centering
  \begin{tabular}{ccc}
    \subfloat[Runtimes on YEAST]{\includegraphics[width=0.32\textwidth]{./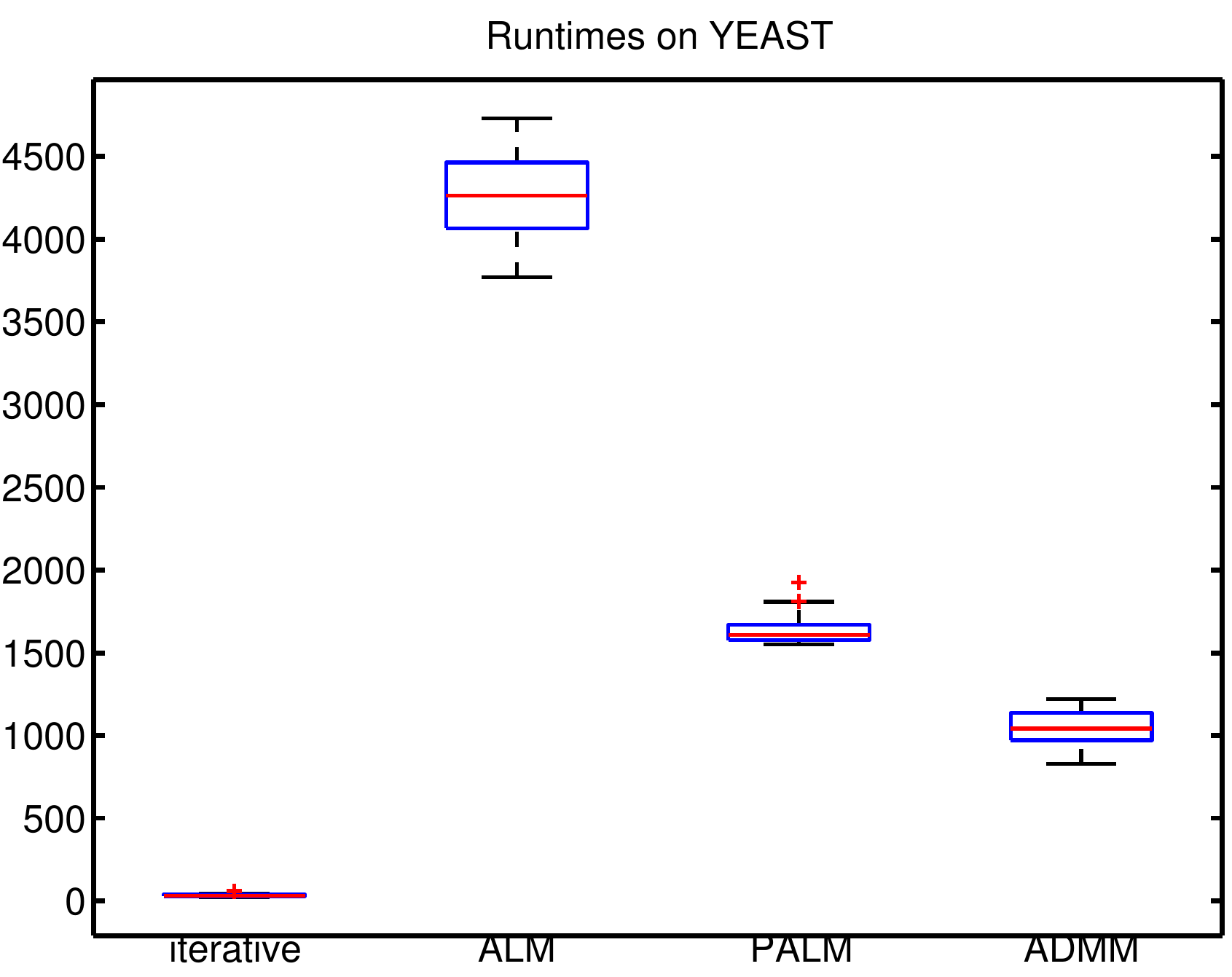}}&
    \subfloat[Runtimes on SCENE]{\includegraphics[width=0.32\textwidth]{./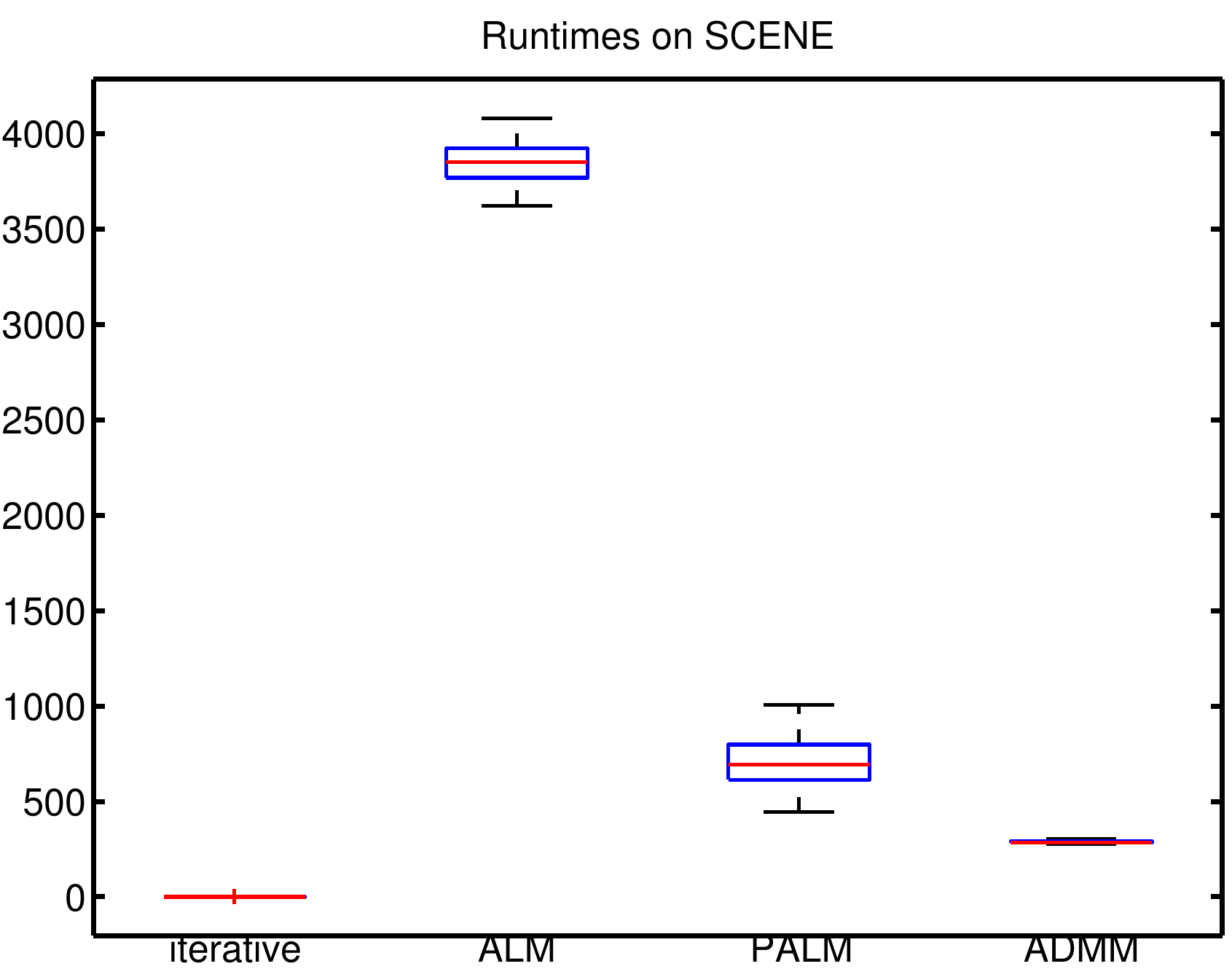}}&
    \subfloat[Runtimes on MUSIC]{\includegraphics[width=0.32\textwidth]{./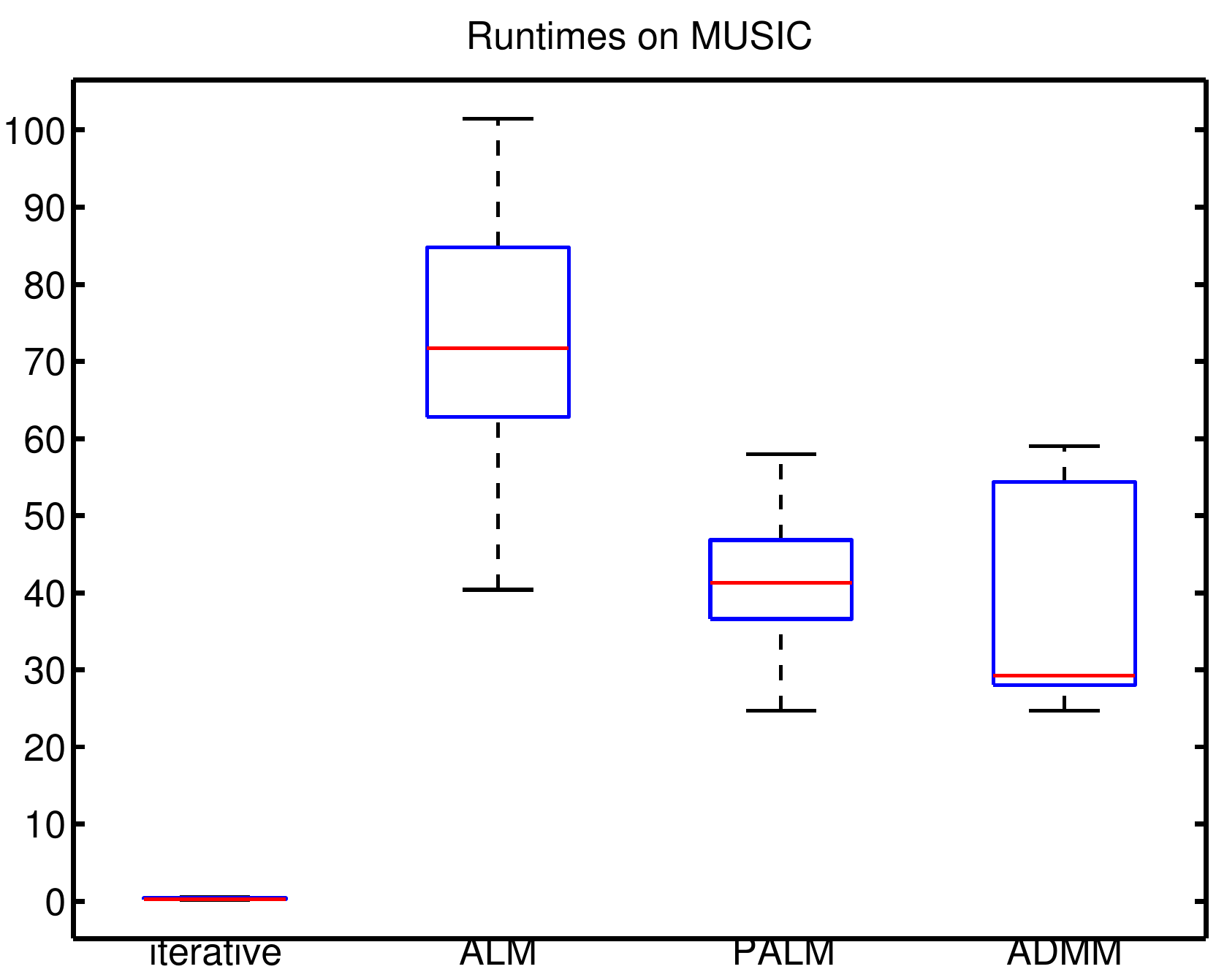}}
  \end{tabular}
\end{minipage}
\centering
\begin{minipage}[b]{1\linewidth}\centering
  \centering
  \begin{tabular}{ccc}
    \subfloat[NEO-K-Means objectives on YEAST]{\includegraphics[width=0.32\textwidth]{./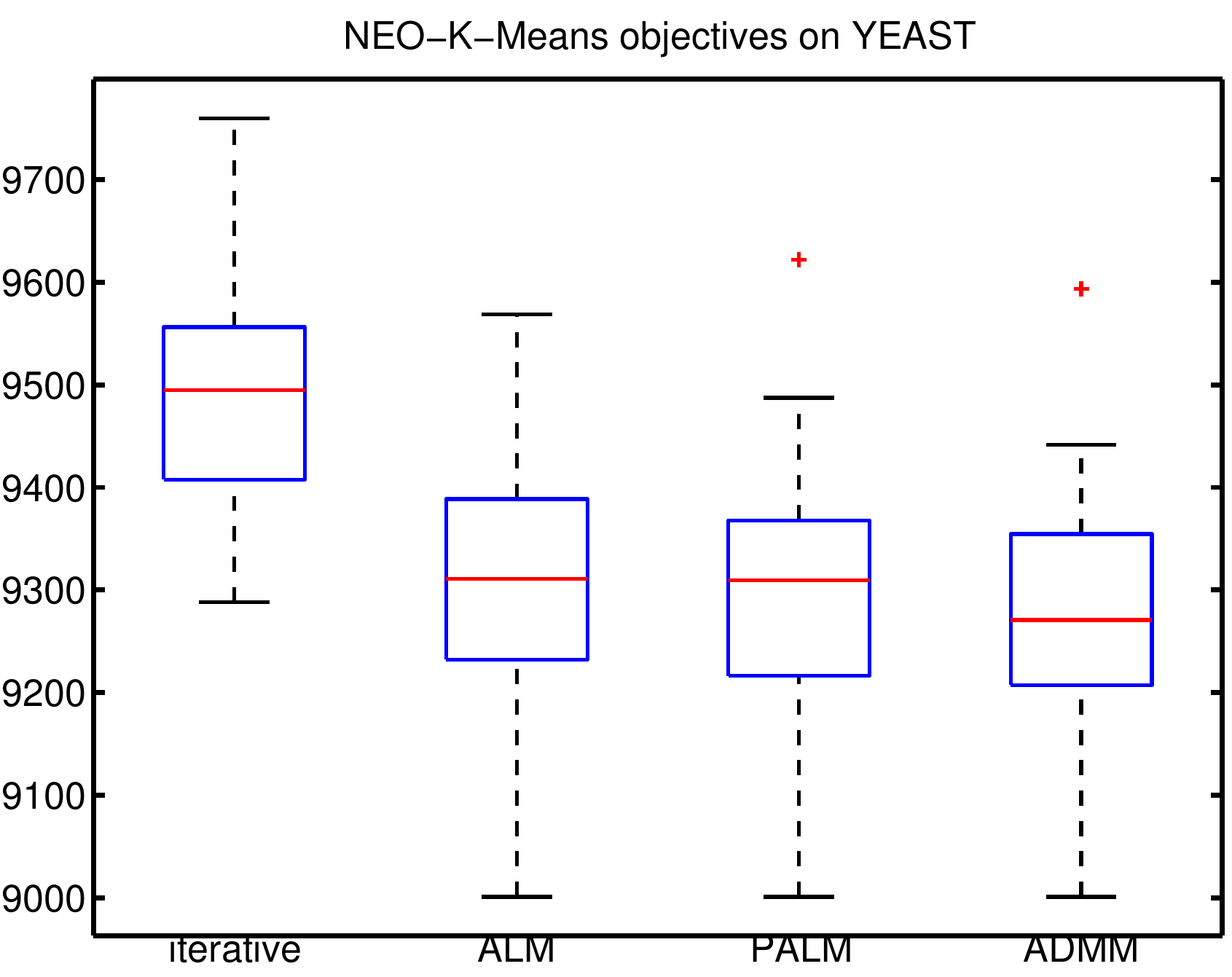}}&
    \subfloat[NEO-K-Means objectives on SCENE]{\includegraphics[width=0.32\textwidth]{./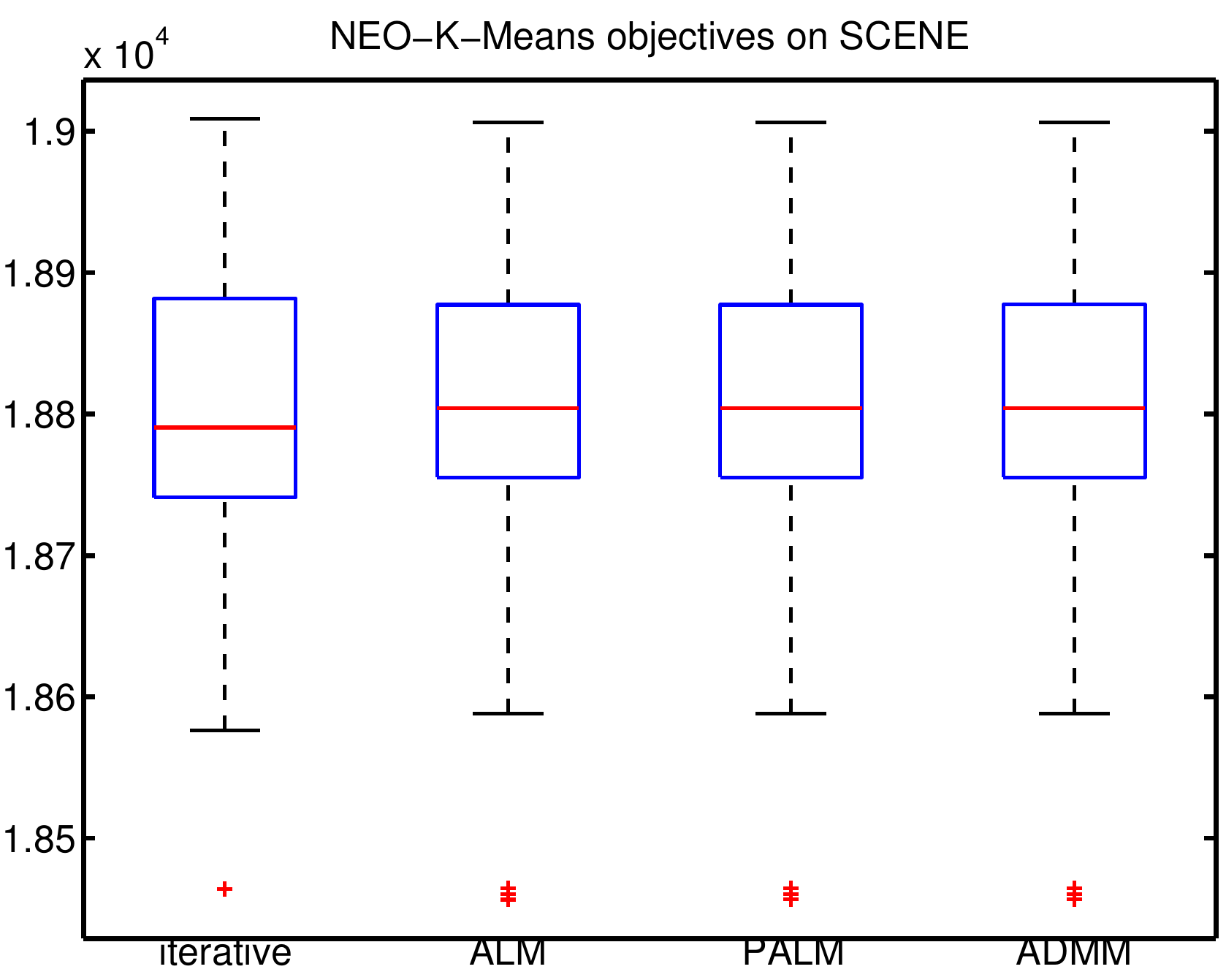}}&
    \subfloat[NEO-K-Means objectives on MUSIC]{\includegraphics[width=0.32\textwidth]{./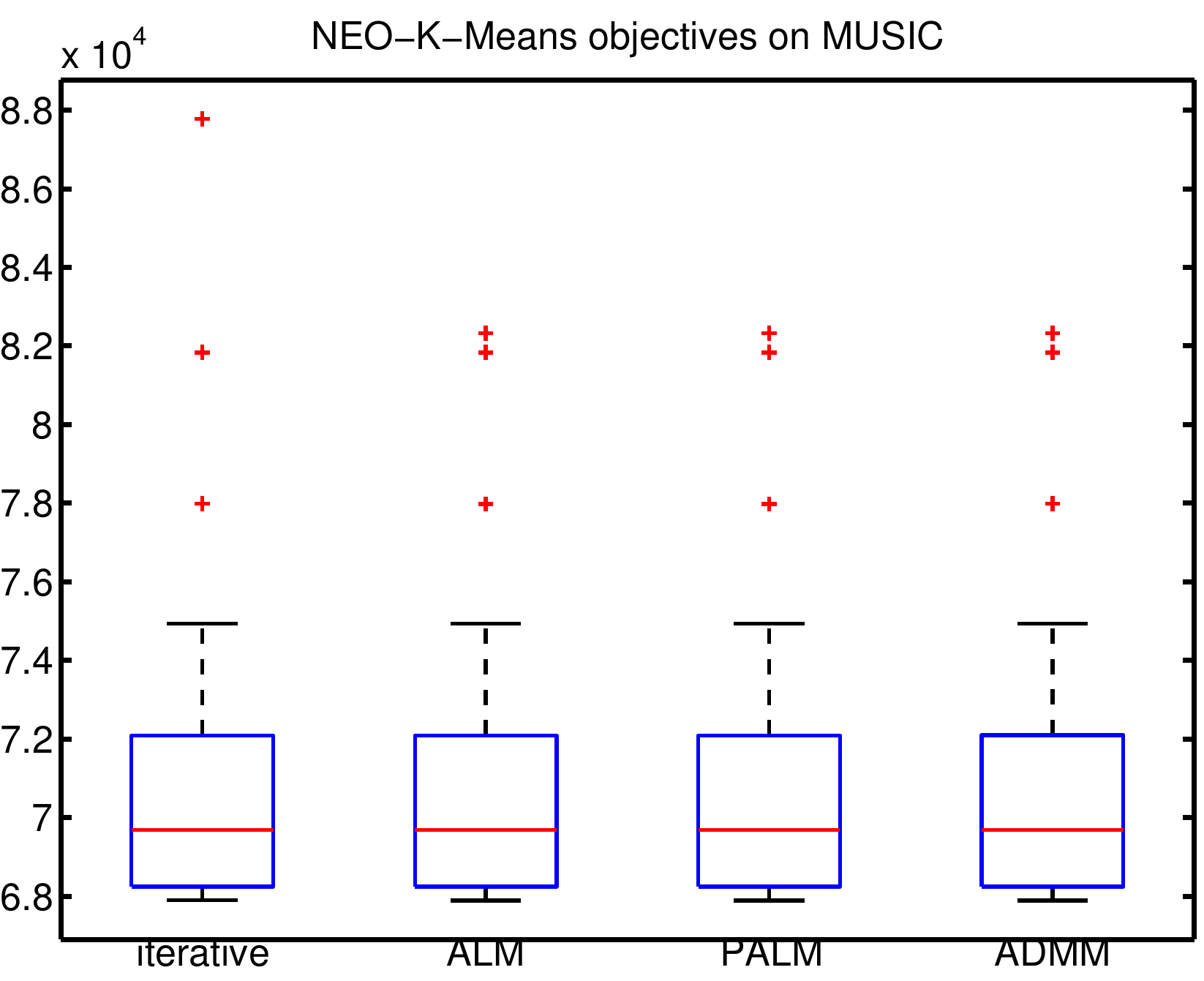}}
  \end{tabular}
\end{minipage}
\centering
\begin{minipage}[b]{1\linewidth}\centering
  \centering
  \begin{tabular}{ccc}
    \subfloat[$F_1$ scores on YEAST]{\includegraphics[width=0.32\textwidth]{./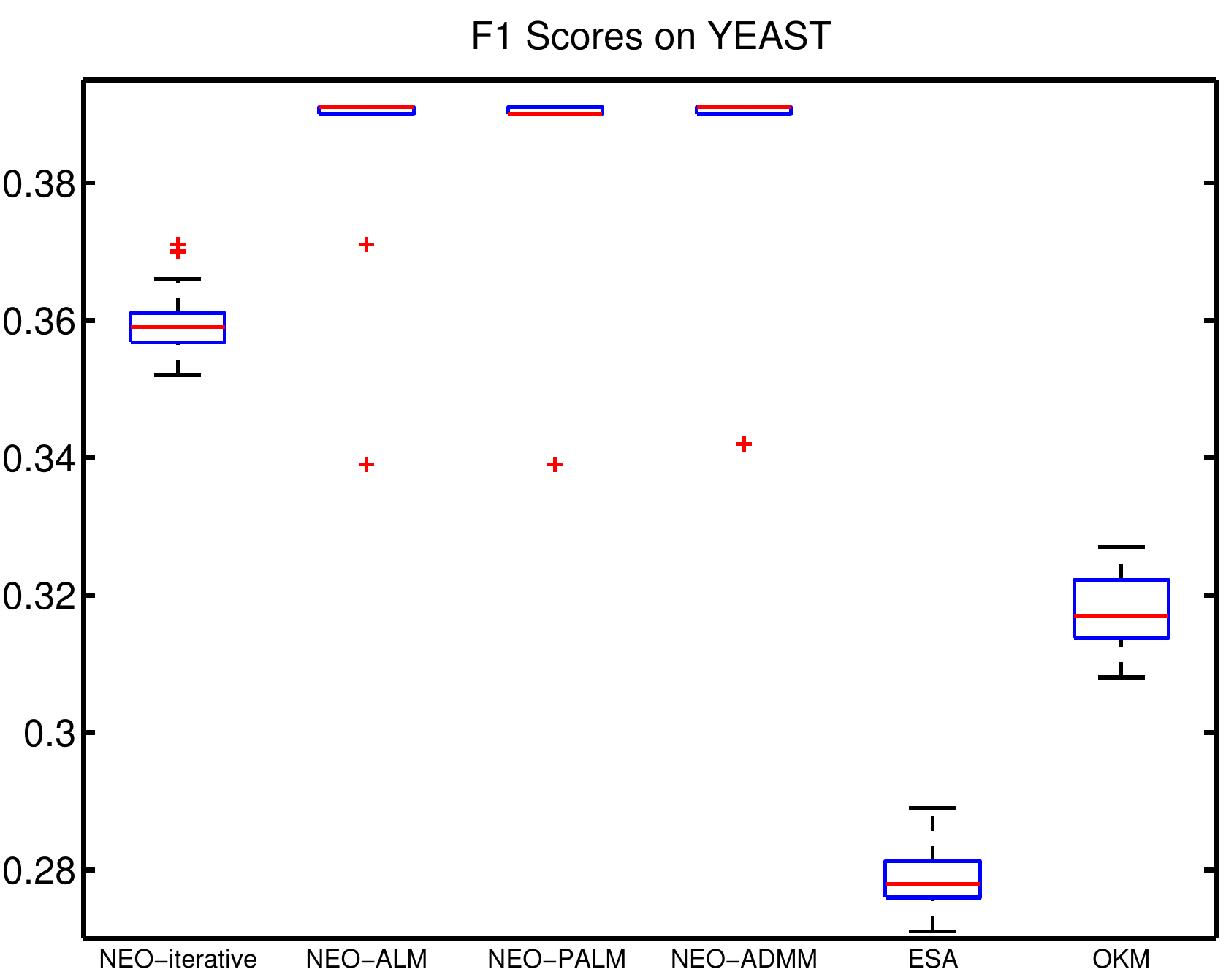}}&
    \subfloat[$F_1$ scores on SCENE]{\includegraphics[width=0.32\textwidth]{./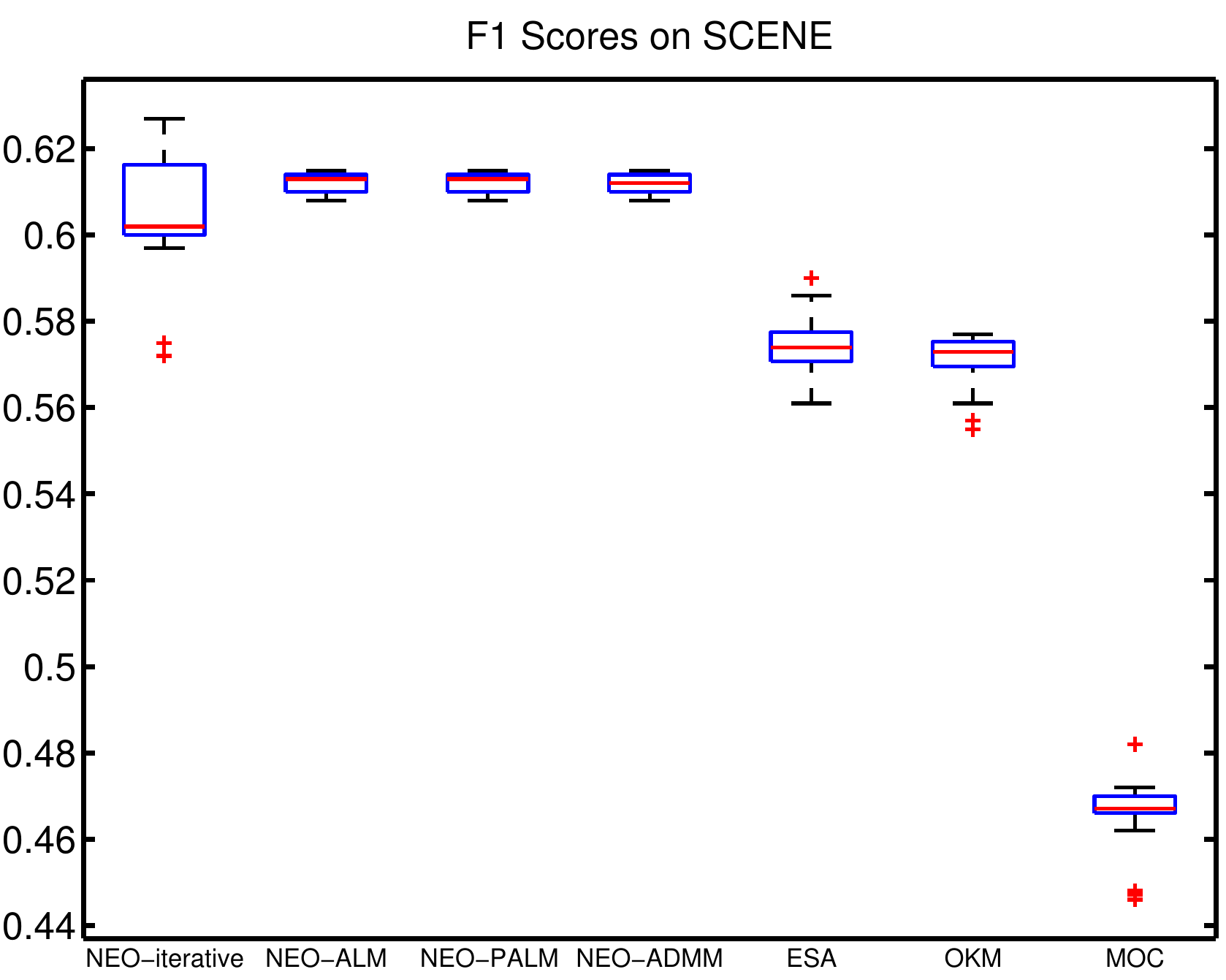}}&
    \subfloat[$F_1$ scores on MUSIC]{\includegraphics[width=0.32\textwidth]{./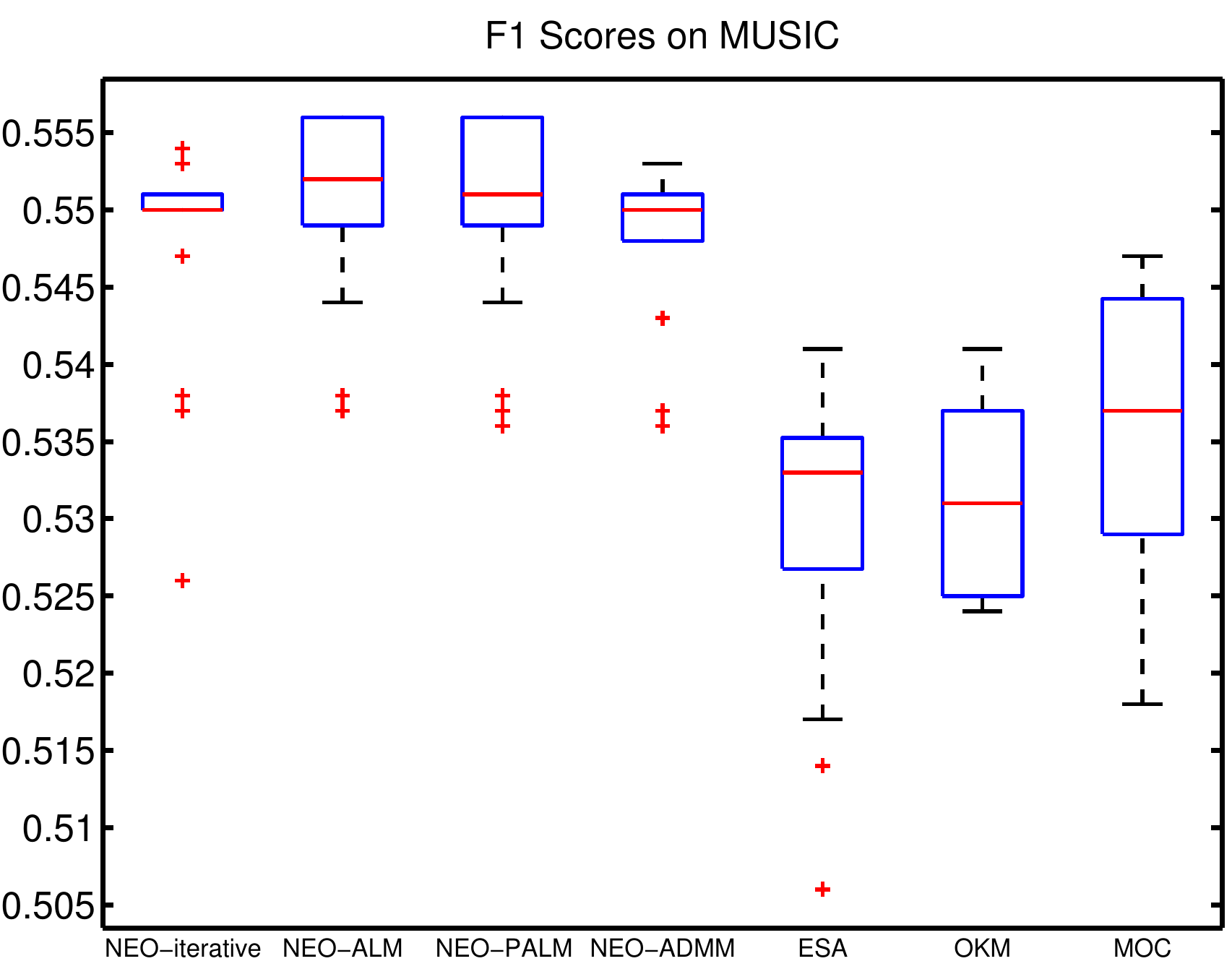}}
  \end{tabular}
\end{minipage}
\caption{Box-plots comparing the results on 25-trials of the algorithms in terms of objective values in~\eqref{eq:neo-lr}, runtimes, NEO-K-Means objective values in~\eqref{eq:neo}, and $F_1$ scores on YEAST, SCENE, and MUSIC datasets. The median performance is indicated by the middle red line and the box shows the 25\% and 75\% percentiles.}
\label{fig_all}
\vspace{-0.5cm}
\end{figure*}

Next, we compare the three methods (ALM, PALM and ADMM) on larger datasets. We use three different datasets from \cite{mulan}. The SCENE dataset~\cite{scene} contains 2,407 scenes represented as feature vectors; the YEAST dataset~\cite{yeast} consists of 2,417 genes where the features are based on micro-array expression data and phylogenetic profiles; the MUSIC dataset~\cite{music} contains a set of 593 different songs. There are known ground-truth clusters on these datasets (we set $k$ as the number of ground-truth clusters; $k$=6 for MUSIC and SCENE, and $k$=14 for YEAST). The goal of this comparison is to demonstrate that PALM and ADMM have performance equivalent to the ALM method, while running substantially faster. We will also compare against the iterative NEO-K-Means algorithm as a reference.

We initialize ALM, PALM, and ADMM using the iterative NEO-K-Means algorithm as also used in~\cite{Hou-Whang-2015-lrsdp-neo}.  The parameters $\alpha$ and $\beta$ in the NEO-K-Means are automatically estimated by the strategies proposed in~\cite{whang-2015-neo}. This initialization renders the method sensitive to the local region selected by the iterative method, but this is usually a high-quality region. We use the procedure from~\cite{Hou-Whang-2015-lrsdp-neo} to round the real-valued solutions to discrete assignments. Briefly, this uses the solution vectors $\vg$ and $\vf$ to determine which points to assign and roughly how many clusters each data point resides in. Assignments are then greedily made based on values of the solution matrix $\mY$.  We run all the methods 25 times on the three datasets, and summarize the results in Figure~\ref{fig_all}. The results from these experiments illustrate the following points:
\begin{compactitem}
	\item (Top-row -- objective values) The PALM, ADMM, and ALM methods are all indistinguishable as far as their ability to optimize the objective of the NEO-K-Means low-rank SDP problem \eqref{eq:neo-lr}.
	\item (Second-row -- runtimes) Both the PALM and ADMM methods are significantly faster than ALM on the larger two datasets, SCENE and YEAST. In particular, ADMM is more than 13 times faster on the SCENE dataset. Since the MUSIC dataset is relatively small, the speedup is also relatively small, but the two new methods, PALM and ADMM are consistently faster than ALM. Note that we do not expect any of the optimization-based methods will be faster than the iterative NEO-K-Means method since it is a completely different type of algorithm (In particular, it optimizes the discretized objective).
\end{compactitem}
Thus, we conclude that the new optimization procedures (PALM and ADMM) are considerably faster than the ALM method while achieving similar objective function values.

The next investigation studies the discrete assignments produced by the methods. Here, we see that (third row of Figure~\ref{fig_all}) there are essentially no differences among any of the optimization methods (ALM, PALM, ADMM) in terms of their objective values after rounding to the discrete solution and evaluating the NEO-K-Means objective. The optimization methods outperform the iterative method on the YEAST dataset by a considerable margin. 


Finally, to see the clustering performance, we compute the $F_1$ score which measures the matching between the algorithmic solutions and the ground-truth clusters in the last row of Figure~\ref{fig_all}. Higher $F_1$ scores indicate better alignment with the ground-truth clusters. We also compare the results with other state-of-the-art overlapping clustering methods, MOC~\cite{moc}, ESA~\cite{sa}, and OKM~\cite{okm}. On the YEAST dataset, MOC returns 13 empty clusters and one large cluster which contains all the data points. So, we do not report $F_1$ score of MOC on this dataset. We first observe that the NEO-K-Means based methods (denoted by NEO-*) are able to achieve higher $F_1$ scores than other methods. When we compare the performance among the three NEO-K-Means optimization methods (NEO-ALM, NEO-PALM, and NEO-ADMM), there is largely no difference among these methods except for the MUSIC dataset. On the MUSIC problem, the ADMM method has a slightly lower $F_1$ score than PALM or ALM. This is because the objective values obtained by ADMM on this dataset seem to be minutely higher than the other two optimization strategies and this manifests as a noticeable change in the $F_1$ score. In this case, however, the scale of the variation is low and essentially, the results from all the NEO-K-Means based methods are equivalent. On the SCENE dataset, the iterative algorithm (NEO-iterative) can sometimes outperform the optimization methods in terms of $F_1$ although we note that the median performance of the optimization is much better and there is essentially no difference between NEO-PALM, NEO-ADMM, and NEO-ALM. On the YEAST dataset, the reduced objective function value corresponds with an improvement in the $F_1$ scores for notably better results than NEO-iterative.

\section{Discussion}
\label{sec:discuss}
Overall, the result from the previous section indicate that both the PALM and ADMM methods are faster than ALM with essentially no change in quality.  Thus, we can easily recommend them instead of ALM for optimizing these low-rank objectives. There is still a substantial gap between the performance of the simple iterative algorithm and the optimization procedures we propose here. However, the optimization procedures avoid the worst-case behavior of the iterative method and result in more robust and reliable results as illustrated on the YEAST dataset and in other experiments from \cite{Hou-Whang-2015-lrsdp-neo}.


In terms of future opportunities, we are attempting to identify a convergence guarantee for the ADMM method in this non-convex case. This would put the fastest method we have for the optimization on firm theoretical ground. In terms of the clustering problem, we are exploring the integrality properties of the SDP relaxation itself~\cite{Hou-Whang-2015-lrsdp-neo}. Our goal here is to show a result akin to that proved in~\cite{rachel} about integrality in relaxations of the $k$-means objective. Finally, another goal we are pursuing involves understanding when our method can recover the partitions from an overlapping block-model with outliers. This should hopefully show that the optimization approaches have a wider recovery region than the simple iterative methods and provide a theoretical basis for empirically observed improvement.

{\small
\subsection*{Acknowledgments}
This research was supported by NSF CAREER award CCF-1149756 to DG, and by NSF grants CCF-1117055 and CCF-1320746 to ID.
}

\bibliographystyle{abbrv}
\bibliography{lrsdp_sdm} 

\end{document}